\newtheorem{proposition}{Proposition}
\newtheorem{corollary}{Corollary}
\newtheorem{theorem}{Theorem}
\def\bfx{\mathbf{x}}
\def\bfy{\mathbf{y}}
\def\bfz{\mathbf{z}}
\def\bfc{\mathbf{c}}
\def\bfp{\mathbf{p}}
\def\bfC{\mathbf{C}}
\def\bfX{\mathbf{X}}
\def\bfY{\mathbf{Y}}
\def\bfD{\mathbf{D}}
\def\bfW{\mathbf{W}}
\def\bfP{\mathbf{P}}
\def\bfL{\mathbf{L}}
\def\bfI{\mathbf{I}}
\def\bfone{\mathbf{1}}
\def\bfZ{\mathbf{Z}}
\def\H{\mathcal{H}}
\def\P{\mathcal{P}}
\def\E{\mathcal{E}}
\def\G{\mathcal{G}}
\def\V{\mathcal{V}}
\def\N{\mathcal{N}}
\def\S{\mathcal{S}}
\def\T{\mathcal{T}}
\def\X{\mathcal{X}}
\def\Y{\mathcal{Y}}
\def\W{\mathcal{W}}
\title{A Novel Regularized Principal Graph Learning Framework on Explicit Graph Representation}
\author[Q. Mao]{Qi Mao}
\author[L. Wang] {Li Wang}
\author[Ivor W. Tsang]{Ivor W. Tsang}
\author[Y. Sun]{Yijun Sun}
\thanks{Q. Mao is with Bioinformatics Lab, The State University of New York at Buffalo, Buffalo, NY 14201, USA.
		E-mail: maoq1984@gmail.com \\ 
		L. Wang is with Department of Mathematics, Statistics, and Computer Science, University of Illinois at Chicago, Chicago, USA. E-mail: liwang8@uic.edu \\
		I.W. Tsang is with the Centre for Quantum Computation \& Intelligent Systems (QCIS), the University of Technology, Sydney, Australia.\\
		Y. Sun is with Department of Microbiology and Immunology, The State University of New York at Buffalo, Buffalo, NY 14201, USA.}
\begin{document}
\maketitle

\begin{abstract}
Many scientific datasets are of high dimension, and the analysis usually requires visual manipulation by retaining the most important structures of data. Principal curve is a widely used approach for this purpose. However, many existing methods work only for data with structures that are not self-intersected, which is quite restrictive for real applications. A few methods can overcome the above problem, but they either require complicated human-made rules for a specific task with lack of convergence guarantee and adaption flexibility to different tasks, or cannot obtain explicit structures of data. To address these issues, we develop a new regularized principal graph learning framework that captures the local information of the underlying graph structure based on reversed graph embedding. As showcases, models that can learn a spanning tree or a weighted undirected $\ell_1$ graph are proposed, and a new learning algorithm is developed that learns a set of principal points and a graph structure from data, simultaneously. The new algorithm is simple with guaranteed convergence. We then extend the proposed framework to deal with large-scale data. Experimental results on various synthetic and six real world datasets show that the proposed method compares favorably with baselines and can uncover the underlying structure correctly.
\end{abstract}

\section{Introduction}

In many fields of science, one often encounters observations represented as high-dimen\-sional vectors sampled from unknown  distributions. It is sometimes difficult to directly analyze data in the original space, and is desirable to perform data dimensionality reduction or associate data with some structured objects for further exploratory analysis. One example is the study of human cancer, which is a dynamic disease that develops over an extended time period through the accumulation of a series of genetic alterations \cite{vogelstein2013cancer}. The delineation of this dynamic process would provide critical insights into molecular mechanisms underlying the disease process, and inform the development of diagnostics, prognostics and targeted therapeutics. The recently developed high-throughput genomics technology has made it possible to measure the expression levels of all genes in tissues from thousands of tumor samples simultaneously. However, the delineation of the cancer progression path embedded in a high-dimensional genomics space remains a challenging problem \cite{Sun2014}. 

Principal component analysis (PCA) \cite{Jolliffe1986} is one of the most commonly used methods to visualize data in a low-dimensional space, but its linear assumption limits its general applications. Several nonlinear approaches based on the kernel trick have been proposed \cite{Scholkopf1999}, but they remain sub-optimal for detecting complex structures. Alternatively, if data dimensionality is high, manifold learning based on the local information of data can be effective. Examples include locally linear embedding (LLE) \cite{Saul2003} and Laplacian eigenmaps \cite{Belkin2001}. However, these methods generally require to construct a carefully tuned neighborhood graph as their performance heavily depends on the quality of constructed graphs. 

Another approach is principal curve, which was initially proposed as a nonlinear generalization of the first principal component line \cite{Hastie1989}. Informally, a principal curve is an infinitely differentiable curve with a finite length that passes through the middle of data. Several principal curve approaches have been proposed, including those that minimize certain types of risk functions such as the quantization error \cite{Hastie1989,Kegl2000,Smola2001,Sandilya2002,Gorban2009} and the negative log-likelihood function \cite{Tibshirani1992, Bishop1998}. To overcome the over-fitting issue, regularization is generally required. K\'{e}gl et al. \cite{Kegl2000} bounded the total length of a principal curve, and proved that the principal curve with a bounded length always exists if the data distribution has a finite second moment. Similar results were obtained by bounding the turns of a principal curve \cite{Sandilya2002}. Recently, the elastic maps approach \cite{Gorban2009} was proposed to regularize the elastic energy of a membrane. An alternative definition of a principal curve based on a mixture model was considered in \cite{Hastie1989}, where the model parameters are learned through maximum likelihood estimation and the regularization is achieved using the smoothness of coordinate functions. Generative topographic mapping (GTM) \cite{Bishop1998} was proposed to maximize the posterior probability of the data which is generated by a low-dimensional discrete grid mapped into the original space and corrupted by additive Gaussian noise. 
GTM provides a principled alternative to the self-organizing map (SOM) \cite{Kohonen1997} for which it is impossible to define an optimality criterion \cite{Erwin1992}.

Methods for learning a principal curve have been widely studied, but they are generally limited to learn a structure that does not intersect itself \cite{Hastie1989}. Only a few methods can handle complex principal objects. K\'{e}gl and Krzyzak \cite{Kegl2002} extended their polygonal line method \cite{Kegl2000} for skeletonization of handwritten digits. The principal manifold approach \cite{Gorban2007} extends the elastic maps approach \cite{Gorban2009} to learn a graph structure generated by graph grammar. A major drawback of the two methods is that they require either a set of predefined rules (specifically designed for handwritten digits \cite{Kegl2002}) or grammars with many parameters difficult to be tuned, which makes their implementations complicated and their adaptations to new datasets difficult. More importantly, their convergences are not guaranteed. Recently, a subspace constrained mean shift (SCMS) method \cite{Ozertem2011} was proposed that can obtain principal points for any given second-order differentiable density function, but it is still not trivial to transform a set of principal points to an explicit structure. In many real applications, an explicit structure uncovered from the given data is helpful for downstream analysis. This critical point will be illustrated in Section \ref{sec:motivation} and Section \ref{sec:experiments} in detail with various real world datasets.

Another line of research relevant to this work is structure learning, which has made a great success on constructing or learning explicit structures from data. The graph structures that are commonly used in graph-based clustering and semi-supervised learning are $k$-nearest neighbor graph and $\epsilon$-neighborhood graph \cite{Belkin2001}. Dramatic influences of two graphs on clustering techniques have been studied in \cite{Maier2009}. Since the $\epsilon$-neighborhood graph could result in disconnected components or subgraphs in the dataset or even isolated singleton vertices, the b-matching method is applied to learn a better $b$-nearest neighbor graph via loopy belief propagation \cite{Jebara2009}. However, it is improper to use a fixed neighborhood size since the curvature of manifold and the density of data points may be different in different regions of the manifold \cite{elhamifar2011sparse}. In order to alleviate these issues, a method for simultaneous clustering and embedding of data lying in multiple manifolds \cite{elhamifar2011sparse} was proposed using $\ell_2$ norm over the errors that measure the linear representation of every data point by using its neighborhood information. Similarly, $\ell_1$ graph was learned for image analysis using $\ell_1$ norm over the errors for enhancing the robustness of the learned graph \cite{Cheng2010}. Instead of learning directed graphs by using the above two methods, an integrated model for an undirected graph by imposing a sparsity penalty on a symmetric similarity matrix and a positive semi-definite constraint on Laplacian matrix was proposed \cite{lake2010discovering}. Although these methods have demonstrated the effectiveness on various problems, it is challenging to apply them for moderate-size data, let alone large-scale data, due to the high computational complexity. Moreover, it might be suboptimal by heuristically transforming a directed graph to an undirected graph for clustering and dimensionality reduction \cite{Cheng2010,elhamifar2011sparse}.


\begin{figure*}[!ht]
	\centering
	\begin{tabular}{@{}c@{}c@{}c}
		\begin{adjustbox}{valign=b}
			\begin{tabular}{c}
				\includegraphics[width=0.25\textwidth, height=1.3in]{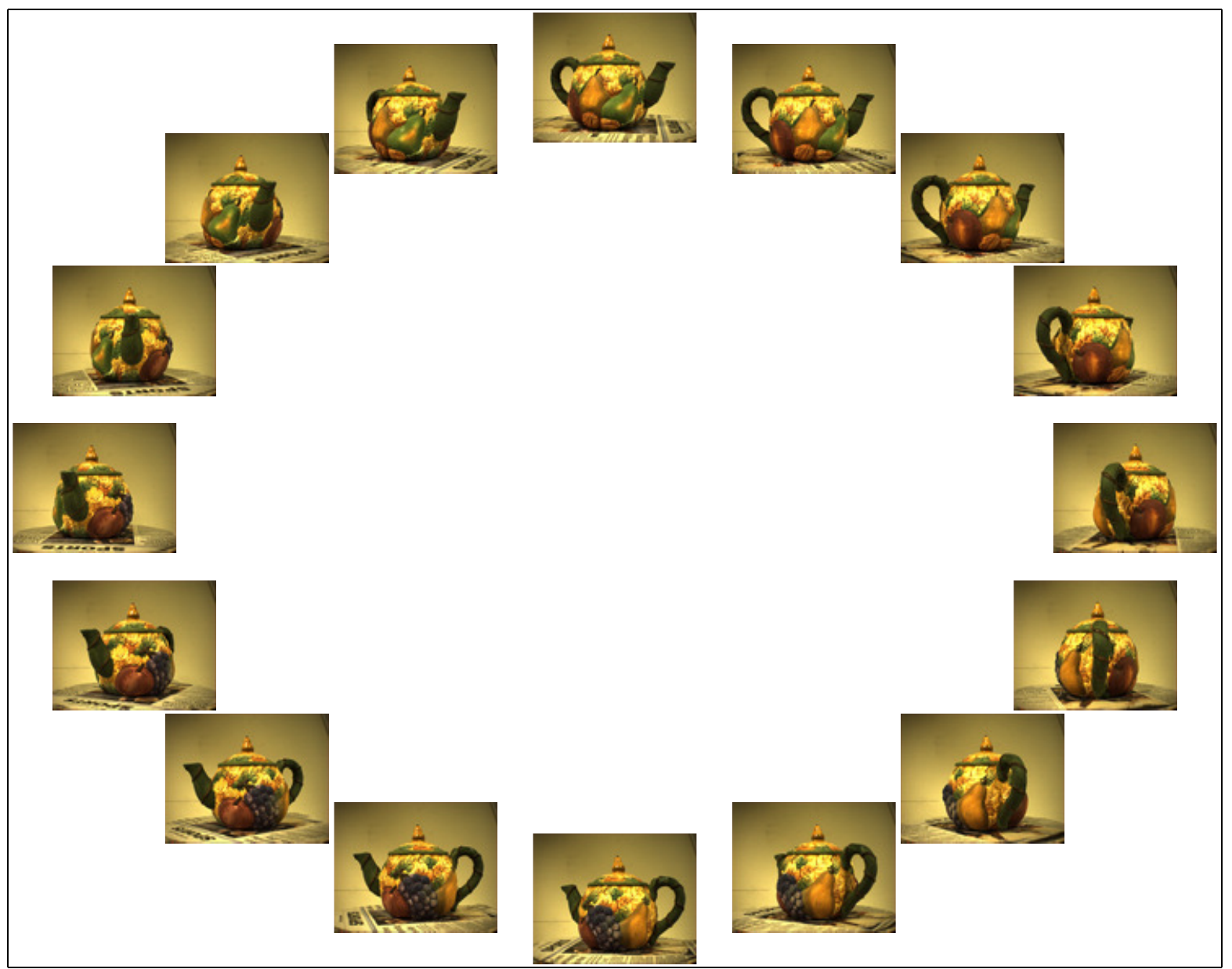} \\
				(a) $360^\circ$ rotation
			\end{tabular}
		\end{adjustbox}

		\begin{adjustbox}{valign=b}
			\begin{tabular}{c}
				\includegraphics[width=0.25\textwidth, height=1.3in]{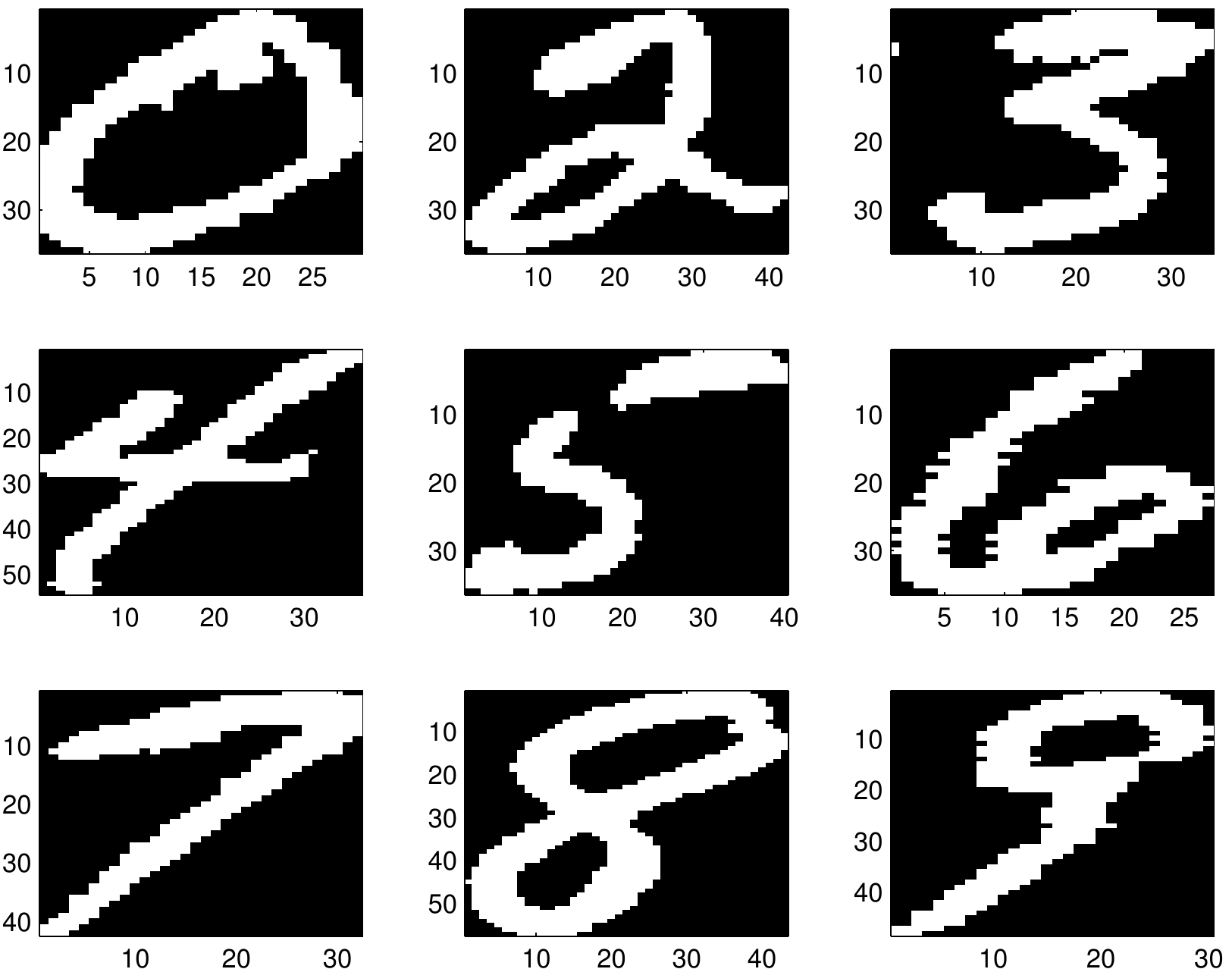} \\
				(b) complicated structures
			\end{tabular}
		\end{adjustbox}	
		
		\begin{adjustbox}{valign=b}
			\begin{tabular}{c}
				\includegraphics[width=0.32\textwidth]{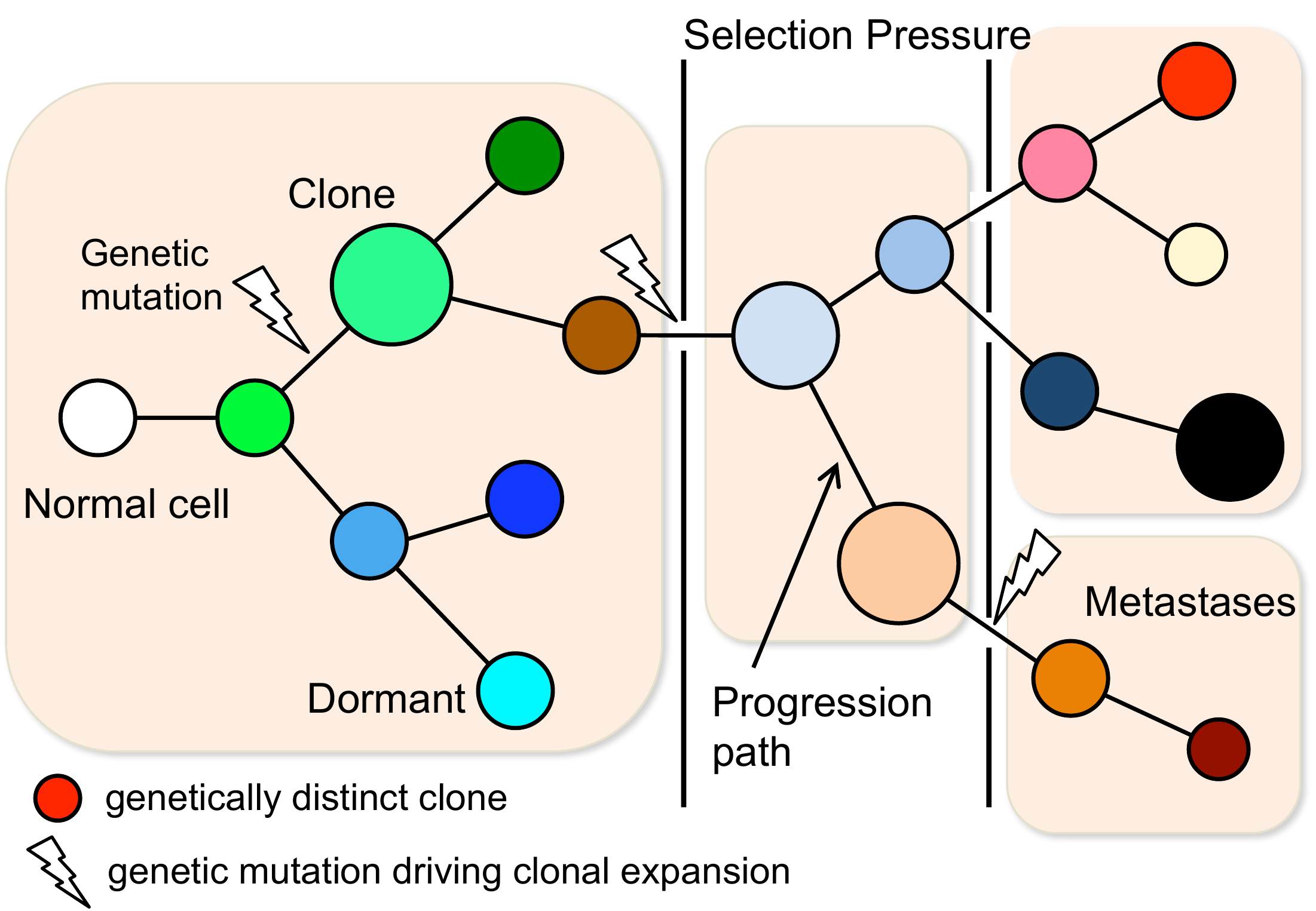} \\
				(c) cancer progression path
			\end{tabular}
		\end{adjustbox}
		
	\end{tabular}
	\caption{ Real world problems exhibiting a variety of graph structures. (a) $360^\circ$ rotation of teapot images forming a circle. (b) Optical character templates for different digits containing loops, bifurcations, and multiple disconnected components. (c) Branching architecture of cancer evolution. Selective pressures allow some tumor clones to expand while others become extinct or remain dormant. } \label{fig:evolution-tree}
\end{figure*}

This paper is an extension of our preliminary work \cite{Mao2015}, where we demonstrated the effectiveness of learning a minimum-cost spanning tree as a showcase for datasets of tree structures in certain applications. We move forward to take into account more complicated structures that might exist in most of real world datasets. Moreover, we propose two strategies to specifically overcome the issue of high-complexity of the proposed method for large-scale data. The main contributions of this paper are summarized as follows:
\begin{itemize}
	\item We propose a novel regularized principal graph learning framework that addresses the aforementioned limitations by learning a set of principal points and an explicit graph structure from data, simultaneously. To the best of our knowledge, this is the first work to represent principal objects using an explicit graph that is learned in a systematic way with a guaranteed convergence and is practical for large-scale data.
	
	\item Our novel formulation called reversed graph embedding for the representation of a principal graph facilitates the learning of graph structures, generalizes several existing methods, and possesses many advantageous properties. 
	
	\item In addition to spanning trees, a weighted undirected $\ell_1$ graph is proposed for modeling various types of graph structures, including curves, bifurcations, self-intersections, loops, and multiple disconnected components. This facilitates the proposed learning framework for datasets with complicated graphs.
	
	\item We further propose two strategies to deal with large-scale data. One is to use side-information as a priori to reduce the complexity of graph learning, the other is to learn a representative graph over a set of landmarks. Both strategies can be simultaneously incorporated into our framework for efficiently handling large-scale data.
	
	\item Extensive experiments are conducted for unveiling the underlying graph structures on a variety of datasets including various synthetic datasets and six real world applications consisting of various structures: hierarchy of facial expression images, progression path of breast cancer of microarray data, rotation circle of teapot images, smoothing skeleton of optical characters, and similarity patterns of digits on two large-scale handwritten digits databases.
	
\end{itemize}

The rest of the paper is organized as follows. We first illustrate the learning problem by using various real world datasets in Section \ref{sec:motivation}. In Section \ref{sec:pgr}, we present three key building blocks for the representation of principal graphs. Based on these key building blocks, we propose a new regularized principal graph learning framework in Section \ref{sec:rpg}. In Section \ref{sec:large}, we incorporate two strategies into the proposed framework to deal with large-scale datasets. Extensive experiments are conducted in Section \ref{sec:experiments}. Finally, we conclude this work in Section \ref{sec:conclusion}.

\section{Motivation of Structure Learning} \label{sec:motivation}

In many fields of science, experimental data resides in a high-dimensional space. However, the distance between two data points may not directly reflect the distance measured on the intrinsic structure of data. Hence, it is desirable to uncover the intrinsic structure of data before conducting further analysis.

It has been demonstrated that many high-dimensional datasets generated from real-world problems contain special structures embedded in their intrinsic dimensional spaces. One example is a collection of teapot images viewed from different angles \cite{Weinberger2006}. Each image contains $76 \times 101$ RGB pixels, so the pixel space has a dimensionality of $23,028$, but the intrinsic structure has only one degree of freedom: the angle of rotation. As shown in Fig. \ref{fig:evolution-tree}(a), distances computed by following an intrinsic circle are more meaningful than distances computed in the original space. Another example is  given in \cite{Song2007} where it is demonstrated that a collection of facial expression images ($217 \times 308$ RGB pixels) contains a two-layer hierarchical structure (Fig. 3(b) in \cite{Song2007}). The images from three facial expressions of one subject are grouped together to form the first layer, while all images from three subjects form the second layer. In other words, images of one subject should be distant from images of all other subjects, but images from three expressions of the same subject should be close. More complicated structures including loops, bifurcations, and multiple disconnected components as shown in Fig. \ref{fig:evolution-tree}(b) can be observed in optical character templates for identifying a smoothing skeleton of each character \cite{Kegl2002,Ozertem2011}. Other examples with specific intrinsic structures are also discussed in  \cite{Tenenbaum2000,Roweis2000}.

We are particularly interested in studying human cancer, a dynamic disease that develops over an extended time period. 
Once initiated from a normal cell, the advance to malignancy can to some extent be considered a Darwinian process - a multistep evolutionary process - that responds to selective pressure \cite{greaves2012clonal}. The disease progresses through a series of clonal expansions that result in tumor persistence and growth, and ultimately the ability to invade surrounding tissues and metastasize to distant organs. As shown in Fig. \ref{fig:evolution-tree}(c), the evolution trajectories inherent to cancer progression are complex and branching \cite{greaves2012clonal}. Due to the obvious necessity for timely treatment, it is not typically feasible to collect time series data to study human cancer progression [28]. However, as massive molecular profile data from excised tumor tissues (static samples) accumulates, it becomes possible to design integrative computation analyses that can approximate disease progression and provide insights into the molecular mechanisms of cancer.  We have previously shown that it is indeed possible to derive evolutionary trajectories from static molecular data, and  that breast cancer  progression can be represented by a high-dimensional manifold with multiple branches \cite{Sun2014}.

These concrete examples convince us that many datasets in a high-dimensional space can be represented by a certain intrinsic structure in a low dimensional space. Existing methods exploit the intrinsic structure of data implicitly either by learning a parametric mapping function or approximating a manifold via local geometric information of the observed data. However, these methods do not aim to directly learn an explicit form of an intrinsic structure in a low-dimensional space. In contrast, our proposed method is designed for this purpose to express the intrinsic structure of data by an explicit graph representation, which will be discussed in the following sections. 

\section{Principal Graph Representation} \label{sec:pgr}


Let $\X = \mathbb{R}^D$ be the input space and $\mathbb{x}=\{ \bfx_1, \ldots, \bfx_N \} \subset \X $ be a set of observed data points. We consider learning a projection function $h_{\G} \in \H$ such that latent points  $\mathbb{y}=\{ \bfy_1, \ldots, \bfy_K \}  \subset \Y = \mathbb{R}^d$ in $d$-dimensional space can trustfully represent $\mathbb{x}$, where $\H = \{ h_{\G}: \Y \rightarrow \X \}$ is a set of functions defined on a graph $\G$, and function $h_{\G}$ maps $\bfy_k$ to $h_{\G}(\bfy_k) \in \X, \forall k=1,\ldots,K$. Let $\G=(\V,\E, \bfW)$ be a weighted undirected graph, where $\V=\{ 1,\ldots,K\}$ is a set of vertices, $\E$ is a set of edges, and $\bfW \in  \mathbb{R}^{K \times K}$ is an edge weight matrix with the $(k, k')$th element denoted by $w_{k,k'}$ as the weight of edge $(k, k'), \forall k, k'$. Suppose that every vertex $k$ has one-to-one correspondence with latent point $\bfy_k \in \Y$, which resides on a manifold with an intrinsic dimension $d$. Next, we introduce three key components that form the building blocks of the proposed framework.

\subsection{Reversed Graph Embedding}

The most important component of the proposed framework is to model the relationship between graph $\G$ and latent points $\mathbb{y}$.
Given $\G$, weight $w_{k,k'}$ measures the similarity (or connection indicator) between two latent points $\bfy_k$ and $\bfy_{k'}$ in an intrinsic space $\Y$. Since the corresponding latent points $\{ \bfy_1, \ldots, \bfy_K \}$ are generally unknown in advance, we coin a new graph-based representation, namely Reversed Graph Embedding, in order to bridge the connection between graph structure $\G$ and corresponding data points in the input space. Specifically, our intuition is that if any two latent variables $\bfy_k$ and $\bfy_{k'}$ are neighbors on $\G$ with high similarity $w_{k,k'}$, two corresponding points $h_{\G}(\bfy_k)$ and $h_{\G}(\bfy_{k'})$ in the input space should be close to one another. 
To capture this intuition,  we propose to minimize the following objective function as an explicit representation of principal graphs, given by

\begin{small}\vspace{-0.1in}
	\begin{align}
	\Omega( \mathbb{y}, h_{\G}, \bfW ) = \sum_{ (k,k') \in \E } w_{k,k'} || h_{\G}(\bfy_k) - h_{\G}(\bfy_{k'}) ||_2^2, \label{eq:embed_rep}
	\end{align}
\end{small}\noindent
where $h_{\G} \in \H$ and $\mathbb{y} \in \Y$ are variables to be optimized for a given $\G$. At this moment, we cannot obtain interesting solutions by directly solving problem (\ref{eq:embed_rep}) with respect to $\{\mathbb{y}, h_{\G} \}$ due to multiple trivial solutions leading to zero objective value. However, objective function (\ref{eq:embed_rep}) possesses many interesting properties and can be used as a graph-based regularization for unveiling the intrinsic structure of the given data, which is detailed below. 

\subsubsection*{Relationship to Laplacian Eigenmap}
The objective function (\ref{eq:embed_rep}) can be interpreted from a reverse perspective of the well-known Laplacian eigenmap \cite{Belkin2001}. Denote $\bfY = [\bfy_1,\ldots,\bfy_N] \in \mathbb{R}^{d \times N}$. The Laplacian eigenmap solves the following optimization problem, 

\begin{small}\vspace{-0.1in}
	\begin{align*}
	\min_{\mathbb{y}}\sum_{i=1}^N \sum_{j=1}^N v_{i,j} ||\bfy_i - \bfy_j||_2^2: \bfY \bfD \bfY^T = \bfI, 
	\end{align*} 
\end{small}\noindent
where the similarity $v_{i,j}$ between $\bfx_i$ and $\bfx_j$ is computed in the input space $\X$ in order to capture the locality information of the manifold modeled by neighbors of the input data points, e.g., $v_{i,j} = \exp(- \frac{ || \bfx_i - \bfx_j ||_2^2 }{2 \sigma^2})$ using the heat kernel with bandwidth parameter $\sigma^2$ if $i$ is the neighbor of $j$ and $j$ is also the neighbor of $i$, and $v_{i,j}=0$ otherwise, and $\bfD = diag ([ \sum_{i=1}^N v_{i,j} ] ) \in \mathbb{R}^{N \times N}$ is a diagonal matrix.
The Euclidean distance between $\bfy_i$ and $\bfy_j$, i.e., $||\bfy_i - \bfy_j||_2$, are computed in the latent space $\Y$. 
Consequently, two multiplication terms of the objective functions of reverse graph embedding and Laplacian eigenmap are calculated in a different space: (i) weights $w_{i,j}$ and $v_{i,j}$ are computed in $\Y$ and $\X$, respectively; (ii) distances $|| h_{\G}(\bfy_k) - h_{\G}(\bfy_{k'}) ||_2$ and $||\bfy_i - \bfy_j||_2^2$ are computed in $\X$ and $\Y$, respectively. 

Our formulation can directly model the intrinsic structure of data, while Laplacian eigenmap approximates the intrinsic structure by using neighborhood information of observed data points. In other words, our proposed reversed graph embedding representation facilitates the learning of a graph structure from data. The weight $w_{k,k'}$ encodes the similarity or connectivity between vertices $k$ and $k'$ on $\G$. For example, if $w_{k,k'}=0$, it means that edge $(k,k')$ is absent from $\E$. In most cases, a dataset is given, but graph $\G$ is unknown. In these cases, it is necessary to automatically learn $\G$ from data. The objective function (\ref{eq:embed_rep}) is linear with respect to weight matrix $\bfW$. This linearity property facilitates the learning of the graph structure. We will discuss different representations of graphs based on linearity property in Section \ref{sec:sparse-graph}.

Another important difference is that the number of the latent variables $\mathbb{y}$ is not necessary equal to the number of input data points $\mathbb{x}$, i.e., $K \leq N$. This is useful to obtain a representative graph over a set of landmark points by compressing a large amount of input data points, and the representative graph can still trustfully represent the whole data. We will explore this property for large-scale principal graph learning in Section \ref{sec:landmark}. 


\subsubsection*{Harmonic Function of a General Graph}
We have discussed the properties of the reversed graph embedding by treating variables $\bfy_k$ and $h_{\G}$ as a single integrated variable $h_{\G}(\mathbf{y}_k), \forall k$. Actually, the optimal function $h_{\G} \in \H$ obtained by solving (\ref{eq:embed_rep}) is related to harmonic or pluriharmonic functions.
This can be further illustrated by the following observations. Let $\N_k$ be the neighbors of point $\bfy_k, \forall k$. For any given $\bfy_k$, problem (\ref{eq:embed_rep}) can be rewritten as

\begin{small}\vspace{-0.1in}
	\begin{align*}
	\min_{h_{\G}(\bfy_k)} \sum_{k' \in \N_k} w_{k,k'} || h_{\G}(\bfy_k) - h_{\G}(\bfy_{k'}) ||^2,
	\end{align*}
\end{small}\noindent
which has an analytic solution by fixing the rest of variables $\{h_{\G}(\bfz_j)\}_{j \not = m}$ given by

\begin{small} \vspace{-0.1in}
	\begin{align}
	h_{\G} (\bfy_k) = \frac{1}{\sum_{k' \in \N_k} w_{k,k'}}\sum_{k' \in \N_k} w_{k,k'} h_{\G}(\bfy_{k'}). \label{eq:harmonic_function}
	\end{align}
\end{small}\noindent
If equalities (\ref{eq:harmonic_function}) hold for all $k$, function $h_{\G}$ is a harmoinc function on $\G$ since its value in each nonterminal vertex is the mean of the values in the closest neighbors of this vertex \cite{Gorban2009}. The plurihamonic graphs defined in \cite{Gorban2009} impose penalty only on a subset of $k$-stars as, 

\begin{small}\vspace{-0.1in}
	\begin{align}
	\left|\left|h_{\G} (\bfz_m) - \frac{1}{\sum_{j \in \N_m} w_{m,j}}\sum_{j \in \N_m} w_{m,j} h_{\G}(\bfz_j)\right|\right|^2, \label{eq:LLE}
	\end{align}
\end{small}\noindent
where $|\N_{m}| = k, \forall m$. In contrast, our formulation (\ref{eq:embed_rep}) allows to flexibly incorporate any neighborhood structure existing in $\G$.  The connection of $h_{\G}$ to harmonic or pluriharmonic functions enriches the target function, which has been previously discussed in \cite{Gorban2009}.

\subsubsection*{Extension of Principal Curves to General Graphs}
Equation (\ref{eq:embed_rep}) generalizes various existing methods.
It is worth noting that the quantity $\Omega( \mathbb{y}, h_{\G}, \bfW )$ can be considered as the length of a principal graph in terms of the square of $\ell_2$ norm.  
In the case where $\G$ is a linear chain structure, $\Omega( \mathbb{y}, h_{\G}, \bfW )$ is same as the length of a polygonal line defined in \cite{Kegl2000}. However, general graphs or trees are more flexible than principal curves since the graph structure allows self-intersection. For principal graph learning, elastic map \cite{Gorban2009} also defines a penalty based on a given graph. However, based on the above discussion, it is difficult to solve problem (\ref{eq:LLE}) with respect to both the function $h_{\G}$ and the graph weights $\bfW$ within the elastic-maps framework. In contrast, the proposed reversed graph embedding leads to a simple and efficient algorithm to learn a principal tree or a weighted undirected $\ell_1$ graph with guaranteed convergence. This will be clarified in Section \ref{sec:rpg}.

\subsection{Data Grouping} \label{sec:data-fitting}

The second important component of the proposed framework is to measure the fitness of latent variables $\mathbb{y}$ to a given data $\mathbb{x}$ in terms of a given graph $\G$ and projection function $h_{\G}$. As the number of latent variables is not necessarily equal to the number of input data points, we assume that projected point $h_{\G}(\bfy_k)$ is a centroid of the $k$th cluster of $\mathbb{x}$ so that input data points with high similarity form a cluster. The empirical quantization error \cite{Smola2001} is widely used as the fitting criterion to be minimized for the optimal cluster centroids, and it is also frequently employed in principal curve learning methods \cite{Hastie1989,Kegl2000,Smola2001,Sandilya2002,Gorban2009}, given by

\begin{small}\vspace{-0.13in}
	\begin{align}
	\ell( {\mathbb{x}, \mathbb{y}}, h_{\G}) = \sum_{i=1}^N \min_{k=1,\ldots,K} || \bfx_i -  h_{\G}(\bfy_k) ||_2^2. \label{eq:quant-loss}
	\end{align}
\end{small}\noindent
Based on equation (\ref{eq:quant-loss}), we have the following loss functions.

{\bf Data Reconstruction Error:} If $K=N$, we can reformulate the equation (\ref{eq:quant-loss}) by reordering $\mathbb{y}$ as

\begin{small}\vspace{-0.13in}
	\begin{align}
	\ell_N( {\mathbb{x}, \mathbb{y}}, h_{\G}) = \sum_{i=1}^N || \bfx_i - h_{\G}(\bfy_i) ||_2^2. \label{eq:loss-N}
	\end{align}
\end{small}\noindent
This formulation can be interpreted as the negative log-likelihood of data $\mathbb{x}$ that are i.i.d drawn from a multivariate normal distribution with mean $h_{\G}(\bfy_i)$ and covariance matrix $\frac{1}{2} \bfI$.

{\bf K-means:} If $K < N$, we introduce an indicator matrix $\Pi \in \{0, 1\}^{N \times K}$ with the $(i,k)$th element $\pi_{i,k} = 1$ if $\bfx_i$ is assigned to the $k$th cluster with centroid $h_{\G}(\bfy_k)$, and $\pi_{i,k} = 0$ otherwise. Consequently, we have the following equivalent optimization problem 

\begin{small}\vspace{-0.13in}
	\begin{align}
	\ell_{\Pi}( {\mathbb{x}, \mathbb{y}}, h_{\G}) = \sum_{i=1}^N \sum_{k=1}^K \pi_{i,k} || \bfx_i -  h_{\G}(\bfy_k) ||_2^2, \label{eq:loss-kmeans}
	\end{align}
\end{small}\noindent
where $\sum_{k=1}^K \pi_{i,k} = 1$ and $\pi_{i,k} \in \{0,1\}, \forall i=1,\ldots,N$. This is the same as the optimization problem of $K$-means method that minimizes the objective function (\ref{eq:quant-loss}).

{\bf Generalized Empirical Quantization Error:} If $K \leq N$, a right stochastic matrix $\bfP \in [0,1]^{N \times K}$ with each row summing to one is introduced, that is, $\sum_{k=1}^K p_{i,k}=1, \forall i=1,\ldots, N$. This variant is equivalent to the above representation of indicator matrix $\Pi$ if an integer solution $\bfP$ is obtained. When $K$ is relatively large, $K$-means that minimizes (\ref{eq:loss-kmeans}) might generate many empty clusters. To avoid this issue, we introduce the soft assignment strategy by adding negative entropy regularization as

\begin{small}\vspace{-0.13in}
	\begin{align}
	\ell_{\bfP}( {\mathbb{x}, \mathbb{y}}, h_{\G}) = \sum_{i=1}^N \sum_{k=1}^K p_{i,k} \Big [ || \bfx_i -  h_{\G}(\bfy_k) ||_2^2 + \sigma \log p_{i,k} \Big], \label{eq:soft-kmeans}
	\end{align}
\end{small}\noindent
where $\sigma > 0$ is a regularization parameter.

In this paper, we use $\ell_{\bfP}( {\mathbb{x}, \mathbb{y}}, h_{\G})$ 
since it is a generalized version of the other two cases and also has a close relationship with the mean shift clustering method \cite{Cheng1995}, Gaussian mixture model, and the K-means method.
Below, we discuss these interesting properties in detail. The proofs of Proposition \ref{prop:mean-shift} and Proposition \ref{prop:reformulation} are given in Appendix.

\begin{proposition} \label{prop:mean-shift}
	The mean shift clustering \cite{Cheng1995} is equivalent to minimizing objective function (\ref{eq:soft-kmeans}) with respect to a left stochastic matrix $\bfP \in [0,1]^{N \times K}$ with each column summing to one.
\end{proposition}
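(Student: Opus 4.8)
The plan is to isolate the fitting term (\ref{eq:soft-kmeans}) on its own, setting aside the graph regularizer $\Omega$, and to treat the projected centroids $\bfc_k := h_{\G}(\bfy_k)$ together with the left-stochastic matrix $\bfP$ (so that $\sum_{i=1}^N p_{i,k}=1$ for each $k$) as the free variables. I would establish the equivalence by block-coordinate minimization: alternately solving for $\bfP$ with the centroids fixed and for the centroids with $\bfP$ fixed, and then showing that the coupled stationarity conditions reproduce exactly the mean-shift fixed-point iteration of \cite{Cheng1995}.

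First I would minimize (\ref{eq:soft-kmeans}) over $\bfP$ under the column constraint. Because both the objective and the constraints decouple across columns, this splits into $K$ independent entropy-regularized subproblems, one per centroid. Introducing a Lagrange multiplier $\lambda_k$ for the $k$th column and setting the derivative with respect to $p_{i,k}$ to zero yields $\sigma\log p_{i,k} + \|\bfx_i-\bfc_k\|_2^2 + \sigma + \lambda_k = 0$; solving for $p_{i,k}$ and enforcing the normalization gives the Gibbs form
\begin{align*}
p_{i,k} = \frac{\exp(-\|\bfx_i-\bfc_k\|_2^2/\sigma)}{\sum_{j=1}^N \exp(-\|\bfx_j-\bfc_k\|_2^2/\sigma)}.
\end{align*}
This is the step that carries the technical weight, since it is where the negative-entropy term converts a hard assignment into smooth Gaussian kernel weights and where the regularization parameter $\sigma$ is identified with the squared kernel bandwidth.

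Next I would minimize (\ref{eq:soft-kmeans}) over each $\bfc_k$ with $\bfP$ held fixed. The objective is then a weighted sum of squared distances, so differentiating and solving gives $\bfc_k = \left(\sum_{i=1}^N p_{i,k}\bfx_i\right)/\left(\sum_{i=1}^N p_{i,k}\right)$, and the column constraint $\sum_{i=1}^N p_{i,k}=1$ collapses the denominator to one, leaving $\bfc_k=\sum_{i=1}^N p_{i,k}\bfx_i$. Substituting the Gibbs weights from the previous step then produces
\begin{align*}
\bfc_k = \frac{\sum_{i=1}^N \exp(-\|\bfx_i-\bfc_k\|_2^2/\sigma)\,\bfx_i}{\sum_{j=1}^N \exp(-\|\bfx_j-\bfc_k\|_2^2/\sigma)},
\end{align*}
which is precisely the mean-shift update with a Gaussian kernel. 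I would conclude by observing that iterating these two closed-form updates is exactly the alternating scheme mean shift performs, so the fixed points of the block-coordinate minimization of (\ref{eq:soft-kmeans}) coincide with the stationary points of mean shift, establishing the claimed equivalence. The main obstacle lies entirely in the first step: correctly carrying out the constrained entropy optimization and matching the resulting Gibbs weights and bandwidth to Cheng's kernel formulation; the remaining algebra is routine.
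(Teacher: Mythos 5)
Your proof is correct, and it reaches the proposition by a genuinely different route than the paper after a shared first step. Both proofs hinge on the same entropy calculation: minimizing (\ref{eq:soft-kmeans}) over each column of the left-stochastic $\bfP$ via Lagrange multipliers, yielding the Gibbs weights $p_{i,k}=\exp(-\|\bfx_i-\bfc_k\|_2^2/\sigma)\big/\sum_{j=1}^N\exp(-\|\bfx_j-\bfc_k\|_2^2/\sigma)$, which is exactly the paper's equation (\ref{eq:p}). From there the paths diverge. The paper first formalizes mean shift variationally: it writes the Gaussian kernel density estimate $f(\bfc_k \mid \mathbb{x})$, forms the joint density $\prod_{k=1}^K f(\bfc_k\mid\mathbb{x})$, and poses mode-finding as the maximization problem (\ref{op:mean-shift}); it then substitutes the Gibbs weights back into (\ref{eq:soft-kmeans}) and observes that the profiled objective equals $-\sigma\sum_{k=1}^K\log\sum_{i=1}^N\exp(-\|\bfx_i-\bfc_k\|_2^2/\sigma)$, i.e., $-\sigma$ times the mean-shift objective, giving an equivalence of the two optimization problems themselves. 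You instead keep both blocks, derive the weighted-mean update $\bfc_k=\sum_{i=1}^N p_{i,k}\bfx_i$, and recognize the composition of the two blockwise updates as the Gaussian mean-shift fixed-point iteration, concluding that stationary points coincide. Your route is more algorithmically informative---it exhibits alternating minimization of (\ref{eq:soft-kmeans}) as literally being Gaussian mean shift, the familiar EM-style view---but as stated it establishes equivalence at the level of fixed points rather than of objectives; to match the proposition's wording that mean shift ``is equivalent to minimizing'' (\ref{eq:soft-kmeans}), you should add the one-line back-substitution the paper performs, showing that the reduced problem in $\mathbb{c}$ after eliminating $\bfP$ is exactly the mode-seeking problem (\ref{op:mean-shift}) that mean shift solves. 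Note also that you take Cheng's update formula as the definition of mean shift, whereas the paper derives it from the KDE mode-finding objective; spelling that derivation out would make your fixed-point identification self-contained.
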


\begin{proof}


Mean shift clustering \cite{Cheng1995} is a non-parametric model for locating modes of a density function. Let $\mathbb{c}=\{ \bfc_1,\ldots,\bfc_K \}$ be modes of $f(\mathbb{c})$ that we aim to find. The kernel density estimator is used to estimate the density function from $\mathbb{x}$, for any $ \bfc_k \in \mathbb{c}$, given by
	\begin{align*}
	f(\bfc_k | \mathbb{x}) = \frac{1}{N} \sum_{i=1}^N (\pi \sigma)^{-D/2} \exp \left( - \frac{||\bfx_i - \bfc_k||^2}{\sigma} \right),
	\end{align*}
where Gaussian kernel with covariance matrix $\frac{1}{2}\sigma \bfI$ is used and $\sigma > 0$. Since data points $\mathbb{x}$ are independent and identically distributed, we have a joint density function over $\mathbb{c}$ given by
	\begin{align*}
	f(\mathbb{c} | \mathbb{x}) = \prod_{k=1}^K f(\bfc_k | \mathbb{x}).
	\end{align*}
An optimization problem is naturally formed to find the modes by maximizing the joint density function with respect to $\mathbb{c}$ as
	\begin{align}
	\max_{\mathbb{c}} \sum_{k=1}^K \log \left[\sum_{i=1}^N \exp \left( - \frac{||\bfx_i - \bfc_k||^2}{\sigma}  \right) \right]. \label{op:mean-shift}
	\end{align}
	
On the other hand, we introduce the left stochastic matrix $\bfP \in \P_c$, where $$\P_c = \left\{ \sum_{i=1}^N p_{i,k} = 1, p_{i,k} \geq 0, \forall i, k \right\}.$$ Let $\bfc_k = h_{\G}(y_k), \forall k$ and $\bfp_k = [p_{1,k},\ldots,p_{N,k}]^T$. For each $k=1,\ldots,K$, minimizing equation of generalized empirical quantization error can be reformulated as the following optimization problem
	\begin{align*}
	\min_{\bfp_k} &~ \sum_{i=1}^N  p_{i,k}  \Big[ ||\bfx_i - \bfc_k||^2 + \sigma \log p_{i,k} \Big] \\
	\textrm{s.t.} &~ \sum_{i=1}^N p_{i,k} = 1, p_{i,k} \geq 0, \forall i.
	\end{align*}
By Lagrange duality theorem \cite{Boyd2004}, the KKT conditions for optimal solution $\bfp_k$ are given by
	\begin{align*}
	||\bfx_i - \bfc_k||^2 + \sigma (1 + \log p_{i,k}) + \alpha = 0, \forall k,\\
	\sum_{i=1}^N p_{i,k} = 1,p_{i,k} \geq 0 \forall i, k.
	\end{align*}
By combining above two equalities, we have the optimal solution as
	\begin{align}
	p_{i,k} = \frac{\exp \left( - {||\bfx_i - \bfc_k||^2}/{\sigma} \right)  }{ \sum_{i=1}^N \exp \left( - {||\bfx_i - \bfc_k||^2}/{ \sigma}  \right)  }, \forall i, k \label{eq:p}
	\end{align}
and the objective function to be minimized is given by
	\begin{align*}
	-\sigma \log \left[\sum_{i=1}^N \exp \left( - \frac{||\bfx_i - \bfc_k||^2}{\sigma}  \right) \right], \forall k.
	\end{align*}
By substituting (\ref{eq:p}) into (\ref{eq:soft-kmeans}), it becomes Problem (\ref{op:mean-shift}). The proof is completed.
\end{proof}

\begin{proposition} \label{prop:reformulation}
	Minimizing objective function (\ref{eq:soft-kmeans}) with respect to a right stochastic matrix $\bfP \in [0,1]^{N \times K}$ with each row summing to one can be interpreted as the Gaussian mixture model with uniform weights.
\end{proposition}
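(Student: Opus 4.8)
The plan is to mirror the strategy used for Proposition~\ref{prop:mean-shift}, but to exploit the row-stochastic (rather than column-stochastic) structure of $\bfP$. Since the constraint $\sum_{k=1}^K p_{i,k}=1$ couples only the entries within a single row, the objective (\ref{eq:soft-kmeans}) decouples across the $N$ data points. Writing $\bfc_k = h_{\G}(\bfy_k)$, I would first fix $\mathbb{y}$ and $h_{\G}$ and minimize over each row $(p_{i,1},\ldots,p_{i,K})$ separately, solving for each $i$ the entropy-regularized problem
\begin{align*}
\min_{p_{i,1},\ldots,p_{i,K}} &~ \sum_{k=1}^K p_{i,k}\Big[\,\|\bfx_i-\bfc_k\|^2 + \sigma\log p_{i,k}\,\Big] \\
\textrm{s.t.} &~ \sum_{k=1}^K p_{i,k}=1,\ p_{i,k}\geq 0,\ \forall k.
\end{align*}

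Next, applying Lagrange duality exactly as in Proposition~\ref{prop:mean-shift} --- introducing one multiplier $\beta_i$ for the row constraint and writing the stationarity condition $\|\bfx_i-\bfc_k\|^2 + \sigma(1+\log p_{i,k}) + \beta_i = 0$ --- yields the closed-form minimizer
\begin{align*}
p_{i,k} = \frac{\exp\!\left(-\|\bfx_i-\bfc_k\|^2/\sigma\right)}{\sum_{k'=1}^K \exp\!\left(-\|\bfx_i-\bfc_{k'}\|^2/\sigma\right)}, \quad \forall i,k.
\end{align*}
I would then observe that this is precisely the posterior responsibility of component $k$ for the point $\bfx_i$ in a Gaussian mixture whose $K$ components have uniform mixing weight $1/K$, common spherical covariance $\frac{1}{2}\sigma\bfI$, and means $\bfc_k = h_{\G}(\bfy_k)$, since the uniform prior $1/K$ and the normalizing constant $(\pi\sigma)^{-D/2}$ of each Gaussian cancel between numerator and denominator.

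Substituting this optimal row back into (\ref{eq:soft-kmeans}) collapses each inner sum to $-\sigma\log\sum_{k=1}^K \exp(-\|\bfx_i-\bfc_k\|^2/\sigma)$, so the reduced objective over $\mathbb{y},h_{\G}$ becomes $-\sigma\sum_{i=1}^N \log\sum_{k=1}^K \exp(-\|\bfx_i-\bfc_k\|^2/\sigma)$. To close the argument I would write out the negative log-likelihood of the uniform-weight mixture, namely $-\sum_{i=1}^N \log\big[\frac{1}{K}\sum_{k=1}^K (\pi\sigma)^{-D/2}\exp(-\|\bfx_i-\bfc_k\|^2/\sigma)\big]$, and note that it differs from the reduced objective only by the positive multiplicative factor $\sigma$ and the additive constant $N(\log K + \frac{D}{2}\log(\pi\sigma))$, both independent of the variables $\bfc_k = h_{\G}(\bfy_k)$. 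Hence the two problems share the same minimizers in $\{\bfc_k\}$, which establishes the claimed equivalence.

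The argument is essentially routine once Proposition~\ref{prop:mean-shift} is in place; the only point needing care is the bookkeeping of the $\sigma$ prefactor and the additive constant, together with the explicit check that these terms are independent of $h_{\G}(\bfy_k)$ so that minimizing the reduced objective and maximizing the mixture likelihood are genuinely the same optimization in $\mathbb{y}$ and $h_{\G}$. I also expect the cleanest framing to emphasize the dual reading of the result: the optimal $\bfP$ recovers the posterior responsibilities (the E-step) while the reduced objective is a constant rescaling of the marginal data log-likelihood, so the full minimization in $\bfP,\mathbb{y},h_{\G}$ is an instance of the standard treatment of a uniform-weight Gaussian mixture model.
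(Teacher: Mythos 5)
Your proof is correct and takes essentially the same route as the paper's: per-row Lagrange duality (mirroring Proposition~\ref{prop:mean-shift}) yielding the softmax solution (\ref{eq:p_r}), back-substitution collapsing the objective to $-\sigma\sum_{i=1}^N\log\sum_{k=1}^K\exp(-\|\bfx_i-\bfc_k\|^2/\sigma)$, and identification with the uniform-weight Gaussian mixture likelihood (\ref{op:mgm}). If anything, your explicit bookkeeping of the positive factor $\sigma$ and the additive constant $N\left(\log K + \frac{D}{2}\log(\pi\sigma)\right)$, together with the responsibility (E-step) reading of the optimal $\bfP$, is more careful than the paper's proof, which asserts the equivalence ``for a fixed $\sigma$'' without spelling out these constants.
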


\begin{proof}
Let $\bfP \in \P_r$ be a right stochastic matrix where $\P_r = \left\{ \sum_{k=1}^K p_{i,k} = 1, p_{i,k} \geq 0, \forall i, k \right\}$, and $\mathbb{c}=\{ \bfc_1, \ldots, \bfc_N\}$ where $\bfc_i = h_{\G}(\bfy_i)$, and $\bfC = [\bfc_1,\ldots,\bfc_N] \in \mathbb{R}^{D \times N}$.
Following the proof of Proposition 1, we have the optimal solution of minimizing (\ref{eq:soft-kmeans}) with respect to $\bfP$ as
	\begin{align}
	p_{i,k} = \frac{\exp \left( - {||\bfx_i - \bfc_k||^2}/{\sigma} \right)  }{ \sum_{k=1}^K \exp \left( - {||\bfx_i - \bfc_k||^2}/{ \sigma}  \right)  }, \forall i, k, \label{eq:p_r}
	\end{align}
where the optimal solution $\mathbf{c}$ is achieved by solving the following optimization problem
	\begin{align}
	\max_{\mathbb{c}} \sigma \sum_{i=1}^N \log \left[\sum_{k=1}^K \exp \left( - \frac{||\bfx_i - \bfc_k||^2}{\sigma}  \right) \right]. \label{op:mgm}
	\end{align}
For a fixed $\sigma$, the above objective function is equivalent to the maximum likelihood function of $\mathbb{x}$ which are i.i.d. drawn from the mixture of Gaussian distributions consisting of $K$ components each which is a Gaussian distribution $\N(\bfx | \bfc_k, \frac{1}{2}\sigma \bfI)$ and discrete uniform distribution for each component, i.e. $\frac{1}{K}$.
\end{proof}

\begin{corollary}
	We have the following relationships among three loss functions: 
	\begin{enumerate}
		\item If $\sigma \rightarrow 0$, minimizing (\ref{eq:soft-kmeans}) is equivalent to minimizing (\ref{eq:loss-kmeans}) so that $\bfP=\Pi$.
		\item If $\sigma \rightarrow 0$ and $N = K$, minimizing (\ref{eq:soft-kmeans}) is equivalent to minimizing (\ref{eq:loss-N}).
	\end{enumerate}
\end{corollary}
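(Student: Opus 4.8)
The plan is to handle the two parts in sequence, reducing everything to the behaviour of the soft-assignment objective (\ref{eq:soft-kmeans}) as the entropy weight $\sigma$ shrinks. Writing $d_{i,k} = \| \bfx_i - h_{\G}(\bfy_k) \|_2^2$ for brevity, I would first split the objective as $\ell_{\bfP} = \sum_{i,k} p_{i,k} d_{i,k} + \sigma \sum_{i,k} p_{i,k} \log p_{i,k}$. The key preliminary observation is that the entropy term is uniformly controlled: for every admissible right-stochastic $\bfP$ and every $i$, the quantity $-\sum_k p_{i,k} \log p_{i,k}$ is the Shannon entropy of a distribution on $K$ atoms and hence lies in $[0, \log K]$. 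Consequently $\sigma \sum_{i,k} p_{i,k} \log p_{i,k} \in [-N\sigma \log K, 0]$, which tends to $0$ as $\sigma \to 0$ uniformly over the feasible set of $\bfP$ and independently of the centroids $\bfc_k = h_{\G}(\bfy_k)$.

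For part (1), I would use this uniform bound to exchange the limit with the minimisation. Since $\ell_{\bfP}$ converges uniformly to the linear functional $\sum_{i,k} p_{i,k} d_{i,k}$, the minimum over $\bfP \in \P_r$ converges to that of the limit. But minimising $\sum_{k} p_{i,k} d_{i,k}$ over the probability simplex is a linear program, row by row in $i$, whose optimum is attained at a vertex, so all mass concentrates on an index achieving $\min_k d_{i,k}$. This is precisely an indicator assignment, giving $\bfP = \Pi$ with optimal row-value $\min_k d_{i,k}$; summing over $i$ recovers the quantization error (\ref{eq:quant-loss}), i.e.\ the $K$-means objective (\ref{eq:loss-kmeans}). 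Equivalently, the same conclusion can be read off Proposition \ref{prop:reformulation}: the optimal softmax (\ref{eq:p_r}) converges to the $\arg\min$ indicator, and factoring $\min_k d_{i,k}$ out of the log-sum-exp shows the reduced objective $-\sigma \sum_i \log \sum_k \exp(-d_{i,k}/\sigma)$ tends to $\sum_i \min_k d_{i,k}$.

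For part (2), I would simply chain part (1) with the reduction already recorded in the text. By part (1), as $\sigma \to 0$ minimising (\ref{eq:soft-kmeans}) coincides with minimising the $K$-means objective (\ref{eq:loss-kmeans}). Specialising to $N = K$, the hard assignment $\Pi$ obtained at the optimum is a one-to-one correspondence between data points and centroids, so after the reordering of $\mathbb{y}$ described just below (\ref{eq:quant-loss}) the double sum $\sum_{i,k} \pi_{i,k} d_{i,k}$ collapses to $\sum_i \| \bfx_i - h_{\G}(\bfy_i) \|_2^2$, which is the reconstruction error (\ref{eq:loss-N}).

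The step I expect to require the most care is making \emph{equivalence} precise in the two places where the $\arg\min$ need not be unique. In part (1), if several centroids tie for nearest to some $\bfx_i$, the limiting linear program has an entire face of optimal vertices and the softmax splits its mass evenly among the ties; the objective value still converges to $\min_k d_{i,k}$, but the statement $\bfP = \Pi$ is then to be read as convergence to the set of optimal hard assignments rather than to a single matrix. In part (2), establishing that the optimal assignment is genuinely a bijection when $N = K$ (no centroid serving two points while another is idle) is the one-to-one claim that legitimises the reordering; I would argue it by noting that at a joint optimum over the free latent points any unused centroid can be relocated onto an over-served point without increasing the cost, which forces a permutation.
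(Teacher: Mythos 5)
Your proof is correct, but note that the paper never actually proves this corollary: it is stated as an unproved consequence of Proposition \ref{prop:reformulation}, whose proof substitutes the optimal softmax (\ref{eq:p_r}) and reduces minimizing (\ref{eq:soft-kmeans}) over $\bfP$ to the reduced objective $-\sigma \sum_i \log \sum_k \exp\left(-\|\bfx_i-\bfc_k\|^2/\sigma\right)$, so the intended implicit argument is exactly your secondary remark, the pointwise log-sum-exp limit $-\sigma\log\sum_k e^{-d_{i,k}/\sigma} \to \min_k d_{i,k}$. Your primary route is genuinely different and buys something: the uniform entropy bound $-N\sigma\log K \le \sigma\sum_{i,k}p_{i,k}\log p_{i,k} \le 0$ holds over the entire feasible set and independently of the centroids, hence $\bigl|\min_{\bfP} \ell_{\bfP} - \min_{\Pi} \ell_{\Pi}\bigr| \le N\sigma\log K$, which justifies exchanging the $\sigma\to 0$ limit not only with the inner minimization over $\bfP$ but also with any outer minimization over $\bfC$ and $\bfW$ in the full objective (\ref{op:framework-P}); the log-sum-exp limit alone is pointwise in the centroids and does not immediately give this. (The same entropy bound is in fact what the paper silently uses in Theorem \ref{th:conv}, where the lower bound $-\gamma\sigma N\log N$ appears.) You also repair two points the paper glosses over: the non-uniqueness of the nearest centroid, where your reading of $\bfP=\Pi$ as convergence to the set of optimal hard assignments is the right interpretation, and the part~(2) claim that the optimal assignment with $N=K$ is a permutation, which the paper hides in the phrase ``reordering $\mathbb{y}$'' below (\ref{eq:quant-loss}); your relocation argument for idle centroids legitimately forces a bijection at a joint optimum over free centroids, with the minor caveat that when data points coincide one only gets equality of optimal values (zero-cost non-bijective assignments exist there), which is all the stated equivalence requires.
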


According to the aforementioned results, we can briefly summarize the merits of function (\ref{eq:soft-kmeans}). First, empty clusters will never be created according to Proposition \ref{prop:reformulation} for any $K \leq N$. Second, the loss function takes the density of input data $\mathbb{x}$ into account. In other words, the centroids obtained by minimizing the loss function should reside in the high density region of the data. Third, the loss function makes the learning of graph structures computationally feasible for large-scale data in the case of $K < N$, which will be discussed in Section \ref{sec:large}.

\subsection{Latent Sparse Graph Learning} \label{sec:sparse-graph}
The third important component of the proposed framework is to learn a latent graph from data.
To achieve this goal, we investigate two special graph structures that can be learned from data by formulating the learning problems as linear programming. One is a tree structure, represented as a minimum-cost spanning tree, the other is a weighted undirected graph that is assumed to be sparse. Let $\mathbb{z} = \{ \bfz_1, \ldots, \bfz_K \} \subset \mathbb{R}^D$ be a dataset. Our task is to construct a weighted undirected graph $\G=(\V,\E, \bfW)$ with a cost $\phi_{k,k'}$ associated with edge $(k,k') \in \E, \forall k, k'$ by optimizing similarity matrix $\bfW \in \mathbb{R}^{K \times K}$ based on the assumption of the specific graph structure.

\subsubsection*{Minimum Cost Spanning Tree}

Let $\T = (\V, \E_{\T})$ be a tree with the minimum total cost and $\E_{\T}$ be the edges forming a tree.
In order to represent and learn a tree  structure, we consider $\bfW$ as binary indicator matrix where $w_{k,k'}=1$ if $(k,k') \in \E_{\T}$, and $w_{k,k'}=0$ otherwise.
The integer linear programming formulation of a minimum spanning tree (MST) can be written as, 
$\min_{\bfW \in \W_0} \sum_{k,k'} w_{k,k'} \phi_{k,k'}$,
where $\W_0 = \{ \bfW \in \{0,1\}^{N \times N}\} \cap \W'$ and $\W'= \{\bfW = \bfW^T\} \cap \{ \frac{1}{2} \sum_{k,k'} w_{k,k'} = |\V| -1, w_{k,k}=0, \forall k \} \cap \{ \frac{1}{2} \sum_{ k \in \S, k' \in \S} w_{k,k'} \leq |\S|-1, \forall \S \subseteq \V  \}$. The first constraint of $\W'$ enforces the symmetric connection of undirected graph, e.g. $w_{k,k'} = w_{k',k}$.
The second constraint states that the spanning tree only contains $|\V|-1$ edges. The third constraint imposes the acyclicity and connectivity properties of a tree. It is difficult to solve an integer programming problem directly. Thus, we resort to a relaxed problem by letting $w_{k,k'} \geq 0$, that is,

\begin{small}\vspace{-0.1in}
	\begin{align}
	\min_{\bfW \in \W_{\T}} &~ \sum_{k=1}^K \sum_{k'=1}^K w_{k,k'} \phi_{k,k'}, \label{op:spanning-tree}
	\end{align}
\end{small}\noindent
where the set of linear constraints is given by $\W_{\T} = \left\{ \bfW \geq 0 \right\} \cap \W'$.
Problem (\ref{op:spanning-tree}) can be readily solved by the Kruskal's algorithm \cite{Kruskal1958,Cheung2008}. 


\subsubsection*{Weighted Undirected $\ell_1$ Graph}
A complete graph with imposed sparse constraints over edge weights $\bfW$ called $\ell_1$ graph is considered for learning a sparse graph.
An $\ell_1$ graph is motivated from the intuition that one data point can usually be represented by a linear combination of other data points if they are similar to this data point. In other words, two data points $\bfz_k$ and $\bfz_{k'}$ are similar if edge weight $w_{k,k'}$ is a positive value, which can be treated as a similarity measurement between two data points. 
Consequently, the coefficient of linear combination is coincident with the edge weight on a graph.
The system of linear equations $\bfz_k = \sum_{k'=1}^K w_{k',k} \bfz_{k'}, \forall k$, where $\bfz_k$ is the vector to be approximated, $\bfW$ is the edge weight matrix, and $\mathbb{z}$ is the overcomplete dictionary with $K$ bases.
In practice, there may exist noises on certain elements of $\bfz_k$, and a natural way is to estimate the edge weights by tolerating these errors. We introduce error $\bm{\xi}_k \in \mathbb{R}^{D}$ to each linear equation and the equality constraints are formulated as
%
	$\bfz_k = \sum_{k'=1, k' \not= k }^K w_{k',k} \bfz_{k'} + \bm{\xi}_k, \forall k$.
In order to learn a similarity measurement for an undirected graph, we impose nonnegative and symmetric constraints on $\bfW$, i.e., $\{ \bfW \geq 0, \bfW = \bfW^T \}$. Generally, a sparse solution is more robust and facilitates the consequent identification of the test sample $\bfz_k$. Following the recent results on the sparse representation problem, the convex $\ell_1$-norm minimization can effectively recover the sparse solution \cite{elhamifar2011sparse,Cheng2010}. Let $\bfZ = [\bfz_1,\ldots,\bfz_K] \in \mathbb{R}^{D \times K}$. We propose to learn $\bfW$ by solving the following linear programming problem

\begin{small}\vspace{-0.1in}
	\begin{align}
	\min_{\bfW, \{\xi_k\}_{k=1}^K } &~ \sum_{k=1}^K \sum_{k'=1}^K w_{k,k'} \phi_{k,k'} + \lambda \sum_{k=1}^K ||\bm{\xi}_k||_1 \label{op:l1-sparse-graph}\\
	\textrm{s.t.} &~ \bfz_k = \sum_{k'=1, k' \not= k }^K w_{k',k} \bfz_{k'} + \bm{\xi}_k, \forall k \nonumber \\
	&~ \bfW \geq 0, \bfW = \bfW^T, diag(\bfW)=0. \nonumber
	\end{align}
\end{small}\noindent
where $\lambda > 0$ is a parameter for weighting all errors and $||\bm{\xi}_k||_1 = \sum_{m=1}^D |\xi_k^m|$ with $\bm{\xi}_k = [\xi_k^1, \ldots, \xi_k^D]^T$. The problem (\ref{op:l1-sparse-graph}) is different from methods \cite{elhamifar2011sparse,Cheng2010} that learn directed $\ell_1$ graphs, and different from the method \cite{lake2010discovering} that learns undirected graphs using the probabilistic model with Gaussian Markov random fields. Moreover, data points $\mathbb{z}$ can be different from the data points that are used to compute costs.

\subsubsection*{Generalized Graph Structure Learning}
For the ease of representation, we use a unified formulation $g( \bfW, \mathbb{z}, \bm{\Phi} )$  for learning a similarity graph, where $\mathbb{z}$ is a given dataset, $\bfW$ is the similarity matrix of a graph over the given dataset, and $\bm{\Phi}$ is a cost matrix with the $(k,k')$th entry as $\phi_{k,k'}$. The feasible space of $\bfW$ is denoted as $\W$. Specifically, we solve the following generalized graph structure learning problem such that any graph learning problem that can be formulated as linear programming with constraints represented as $\W$ can be used in the following proposed framework, given by

\begin{small}\vspace{-0.1in}
	\begin{align}
	\min_{\bfW \in \W} g( \bfW, \mathbb{z}, \bm{\Phi} ). \label{op:general-graph}
	\end{align}
\end{small}\noindent
It is easy to identify the correspondences of problem (\ref{op:general-graph}) to problems (\ref{op:spanning-tree}) and (\ref{op:l1-sparse-graph}).

\section{Regularized Principal Graph} \label{sec:rpg}

By combining the three building blocks discussed in Section \ref{sec:pgr}, we are ready to propose a unified framework for learning a principal graph. We use the alternate convex search algorithm to solve the proposed formulations by simultaneously learning a set of principal points and an undirected graph with guaranteed convergence.

\subsection{Proposed Formulation}

\begin{figure}[t]
	\centering
	\includegraphics[width=0.95\textwidth]{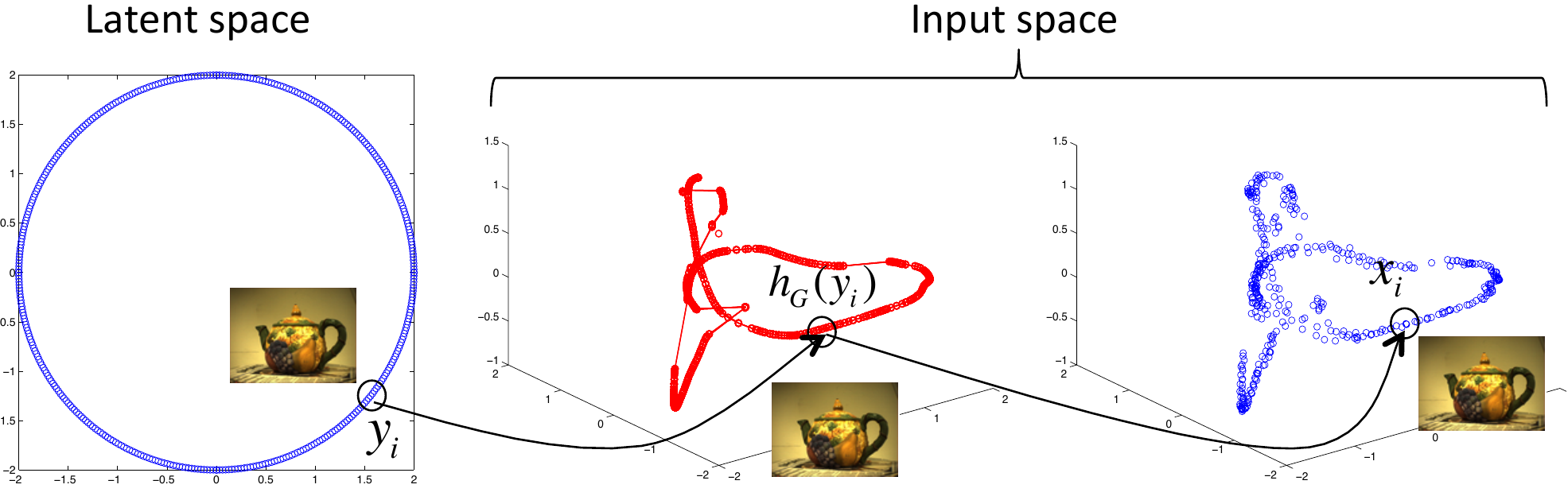} 
	\caption{A cartoon illustrating the proposed formulation on the teapot images. Each circle marker represents one teapot image. Our assumption of the data generation process is that a graph $\G$ exists in the latent space (e.g., a rotation circle in the left subplot), and each image $\bfy_i$ is then mapped to point $h_{\G}(\bfy_i)$ in the input space by maintaining the graph structure through the reversed graph embedding, and finally image $\bfx_i$ is observed conditioned on $h_{\G} (\bfy_i)$ according to certain noise model. } 
	\label{fig:formulation}
\end{figure}

Given an input dataset $\mathbb{x}=\{ \bfx_1, \ldots, \bfx_N \}$, we aim to uncover the underlying graph structure that generates $\mathbb{x}$. Since the input data may be drawn from a noise model, it is improper to learn $\G$ directly from $\mathbb{x}$. To unveil the hidden structure, we assume that (i) the underlying graph can be recovered from the learning model represented in the form of (\ref{op:general-graph}), e.g., the graph is an undirected weighted graph $\G=(\V, \E, \bfW)$ with specific structures; (ii) graph $\G$ satisfies the reversed graph embedding assumption where the cost is defined by $\phi_{i,j}=|| h_{\G}(\bfy_i) - h_{\G}(\bfy_j) ||_2^2$, that is, if two vertices $i$ and $j$ are close on $\G$ with high similarity $w_{i,j}$, points $h_{\G}(\bfy_i)$ and $h_{\G}(\bfy_j)$ should be close to each other; (iii) data points $\{ h_{\G} (\bfy_i) \}_{i=1}^N$ are considered as the denoised points of $\mathbb{x}$ so that the movement of $\bfx_i$ from the denoised point $h_{\G}(\bfy_i)$ can be measured by data grouping properly. 

For clarity, we illustrate the relations of various variables in Fig. \ref{fig:formulation}.  Based on the above assumptions, we propose a novel regularized principal graph learning framework as

\vspace{-0.05in}
\begin{small}
	\begin{align}
	\min_{h_{\G} \in \H, \mathbb{y}} \min_{\bfW \in \W} &~ g(\bfW, \mathbb{x}, \bm{\Phi} ) \label{op:framework}\\
	\textrm{s.t.} &~ ||\bfx_i - h_{\G}(\bfy_i) ||_2 \leq \epsilon_i, \forall i \nonumber \\
	&~ \phi_{i,j}=|| h_{\G}(\bfy_i) - h_{\G}(\bfy_j) ||_2^2, \forall i, j, \nonumber
	\end{align}
\end{small}\noindent
where $\epsilon_i \geq 0$ is a parameter.
It is worth noting that (\ref{op:general-graph}) is the key component of (\ref{op:framework}) for learning a graph given $h_{\G}$ and $\mathbb{y}$, and also contains reversed graph embedding objective (\ref{eq:embed_rep}) as a crucially important element for two special graph learning formulations (\ref{op:spanning-tree}) and (\ref{op:l1-sparse-graph}). In addition, we measure the noise of each input data by the Euclidean distance between the input data point to its denoised counterpart.

Instead of manually setting parameters $\epsilon_i, \forall i$, we assume that the total noise over all input data points is minimal. The optimization problem (\ref{op:framework}) is then reformulated to penalize the total noise
%
where $\gamma >0$ is a tradeoff parameter between graph-based regularizer and the total noises of the input data. We rewrite the above problem as the following optimization problem

\begin{small} \vspace{-0.1in}
	\begin{align*}
	\min_{h_{\G} \in \H, \mathbb{y}} \min_{\bfW \in \W} &~ g(\bfW, \mathbb{x}, \bm{\Phi} ) + \gamma \sum_{i=1}^N || \bfx_i - h_{\G}(\bfy_i) ||_2^2, \\
	\textrm{s.t.} &~ \phi_{i,j}=|| h_{\G}(\bfy_i) - h_{\G}(\bfy_j) ||_2^2, \forall i, j. \nonumber
	\end{align*}
\end{small}\noindent
where the second term in the objective function is equivalent to (\ref{eq:loss-N}).
As we have discussed in Section \ref{sec:data-fitting}, the loss function (\ref{eq:soft-kmeans}) is more flexible than (\ref{eq:loss-N}). 
Consequently, we propose the new regularized principal graph learning framework by simultaneously optimizing the graph weight matrix $\bfW$, latent variables $\mathbb{y}$, projection function $h_{\G} \in \H$, and soft-assignment matrix $\bfP$ as

\begin{small} \vspace{-0.1in}
	\begin{align}
	\min_{h_{\G} \in \H, \mathbb{y}} \min_{\bfW \in \W, \bfP \in \P_r} &~ g(\bfW, \mathbb{x}, \bm{\Phi} ) + \gamma \ell_\bfP( {\mathbb{x}, \mathbb{y}}, h_{\G}) \label{op:framework-P} \\
	\textrm{s.t.} &~ \phi_{i,j}=|| h_{\G}(\bfy_i) - h_{\G}(\bfy_j) ||_2^2, \forall i, j. \nonumber
	\end{align}
\end{small}\noindent
In the following of this section, we propose a simple method for solving problem (\ref{op:framework-P}) and present its convergence and complexity analysis.


\subsection{Optimization Algorithm}
We focus on learning the underlying graph structure from the data. Instead of learning $h_{\G} \in \H$ and $\mathbb{y}$ separately\footnote{ To learn $h_{\G}$, we should define $\H$ properly, e.g., a linear projection function might be possible as studied in our previous work \cite{Mao2005b}.}, we optimize $h_{\G}(\bfy_i)$ as a joint variable $\bfc_i, \forall i$.
Let $\mathbb{c}=\{ \bfc_1, \ldots, \bfc_N\}$ where $\bfc_i = h_{\G}(\bfy_i)$, and $\bfC = [\bfc_1,\ldots,\bfc_N] \in \mathbb{R}^{D \times N}$. In the case of weighted undirected $\ell_1$ graph learning, the optimization problem (\ref{op:framework-P}) with respect to variables $\{ \mathbb{c}, \bfW, \bfP \}$ are reformulated as

\begin{small} \vspace{-0.1in}
	\begin{align}
	\min_{\mathbb{c}, \bfW, \bfP}&~ \sum_{i=1}^N \sum_{j=1}^N w_{i,j} ||\bfc_i - \bfc_j||_2^2 + \lambda \sum_{i=1}^N \Big|\Big| \bfx_i - \sum_{j \not= i} w_{j,i} \bfx_j \Big|\Big|_1 \nonumber\\
	+&~\gamma \sum_{i=1}^N \sum_{j=1}^N p_{i,j} \Big [ || \bfx_i - \bfc_j ||_2^2 + \sigma \log p_{i,j} \Big] \label{op:final-N}\\
	\textrm{s.t.} &~ \bfW \geq 0, \bfW = \bfW^T, diag(\bfW)=0 \nonumber \\
	&~ \sum_{j=1}^N p_{i,j} = 1, p_{i,j} \geq 0, \forall i, j. \nonumber
	\end{align}
\end{small}\noindent
In the case of learning a MST, the feasible space $\W$ is $\W_{\T}$, and the objective function is similar to (\ref{op:final-N}) by removing the second term from the objective function. For simplicity, we do not explicitly show the formulation of learning a spanning tree. 

Problem (\ref{op:final-N}) is a bi-convex optimization problem \cite{Gorski2007}: for fixed $\bfP$ and $\bfW$, optimizing $\mathbb{c}$ is a convex optimization problem; for fixed $\mathbb{c}$, optimizing $\bfP$ and $\bfW$ is also convex.
We propose to solve problem (\ref{op:final-N}) by using the alternate convex search, a minimization method to solve a biconvex problem where the variable set can be divided into disjoint blocks \cite{Gorski2007}. The blocks of variables defined by convex subproblems are solved cyclically by optimizing the variables of one block while fixing the variables of all other blocks. In this way, each convex subproblem can be solved efficiently by using a convex minimization method. Below, we discuss each subproblem in details and the pseudo-code of the proposed method is given in Algorithm \ref{alg:pgl}.

\subsubsection*{Fix $\bfC$ and Solve $\bfP, \bfW$}
Given $\bfC$, Problem (\ref{op:final-N}) with respect to $\bfP $ and $ \bfW $ can be solved independently. 
Due to the decoupled variables of $\bfP$ in rows, we can solve each row independently.
According to Proposition \ref{prop:reformulation}, we have the analytic solution of $\bfP$ given by

\begin{small} \vspace{-0.1in}
	\begin{align}
	p_{i,j} = \frac{\exp \left( - {||\bfx_i - \bfc_j||^2}/{\sigma} \right)  }{ \sum_{j=1}^N \exp \left( - {||\bfx_i - \bfc_j||^2}/{ \sigma}  \right)  }, \forall i, j, \label{eq:final-N-P}
	\end{align}
\end{small}\noindent
For Problem (\ref{op:final-N}) with respect to $\bfW$, we have to solve the following optimization problem

\begin{small} \vspace{-0.1in}
	\begin{align}
	\min_{\bfW} &~ trace( \bm{\Phi}^T \bfW ) + \lambda ||\bfC (\bfI - \bfW)||_1 \label{eq:final-N-W}\\
	\textrm{s.t.} &~\bfW \geq 0, \bfW = \bfW^T, diag(\bfW)=0, \nonumber
	\end{align}
\end{small}\noindent
where $\bm{\Phi} \in \mathbb{R}^{N \times N}$ with the $(i,j)$th element $\phi_{i,j} = ||\bfc_i - \bfc_j||_2^2$.
For Problem (\ref{op:framework-P}) to learn a spanning tree structure, we solve the following problem

\begin{small}\vspace{-0.1in}
	\begin{align}
	\min_{\bfW \in \W_{\T}} trace( \bm{\Phi}^T \bfW ). \label{eq:final-N-W-tree}
	\end{align}
\end{small}\noindent
As discussed in Section \ref{sec:sparse-graph}, Problem (\ref{eq:final-N-W-tree}) can be solved by the Kruskal's algorithm. Problem (\ref{eq:final-N-W}) is a linear programming problem, which can be solved efficiently by off-the-shelf linear programming solver for small- or moderate-sized datasets, such as Mosek \cite{andersen2000mosek}.

\subsubsection*{Fix $\bfP, \bfW$ and Solve $\mathbf{C}$}
Given $ \bfP $ and $\bfW$, Problem (\ref{op:final-N}) with respect to $\bfC$ can be rewritten as

\begin{small} \vspace{-0.1in}
	\begin{align}
	\min_{\bfC}  &~ trace\left( \bfC (2 \bfL + \gamma \bm{\Lambda} ) \bfC^T - 2 \gamma \bfC^T \bfX \bfP \right), \label{op:QP-C}
	\end{align}
\end{small}\noindent
where the graph Laplacian matrix $\bfL = diag(\bfW \bfone) - \bfW$ and diagonal matrix $\bm{\Lambda} = diag( \bfone^T \bfP)$. Problem (\ref{op:QP-C}) is an unconstrained quadratic programming. We have the analytic solution given by

\begin{small} \vspace{-0.1in}
	\begin{align}
	\bfC = \bfX \bfP ( 2 \gamma^{-1} \bfL + \bm{\Lambda} )^{-1},
	\end{align}
\end{small}\noindent
where the inverse of matrix $2 \gamma^{-1} \bfL + \bm{\Lambda} $ always exists since $\bfL$ is a positive semi-definite matrix and diagonal matrix $\bm{\Lambda}$ is always positive definite according to (\ref{eq:final-N-P}).

\begin{algorithm}[!t]
	\caption{Principal Graph Learning}
	\label{alg:pgl}
	\begin{small}
		\begin{algorithmic}[1]
			\STATE \textbf{Input:} Data $\bfX \in \mathbb{R}^{D \times N}$, $\lambda$, $\sigma$, and $\gamma$
			\STATE Initialize $\bfC = \bfX$ 
			\REPEAT 
			\STATE  $\phi_{i,j} = ||\bfc_i - \bfc_j||_2^2, \forall i, j$
			\STATE $p_{i,j} = \frac{\exp \left( - {||\bfx_i - \bfc_j||^2}/{\sigma} \right)  }{ \sum_{j=1}^N \exp \left( - {||\bfx_i - \bfc_j||^2}/{ \sigma}  \right)  }, \forall i, j$
			\STATE Solve the following linear programming problem: \\
			- (\ref{eq:final-N-W}) for a weighted undirected $\ell_1$ graph \\
			- (\ref{eq:final-N-W-tree}) for a spanning tree
			\STATE $\bfC = \bfX \bfP ( 2 \gamma^{-1} \bfL + \bm{\Lambda} )^{-1}$
			\UNTIL{Convergence}
		\end{algorithmic}
	\end{small}\noindent
\end{algorithm}

\subsection{Convergence and Complexity Analysis} \label{sec:convergence}
Since problem (\ref{op:final-N}) is non-convex, there may exist many local optimal solutions. Following the initialization strategy of the mean shift clustering, we initialize $\bfC$ to be the original input data as shown in Algorithm \ref{alg:pgl}. The theoretical convergence analysis of Algorithm \ref{alg:pgl} is presented in the following theorem.

\begin{theorem} \label{th:conv}
	Let $\{\bfW_{\ell},\bfC_{\ell},\bfP_{\ell}\}$ be the solution of Problem (\ref{op:final-N}) in the $\ell$th iteration, and ${\varrho}_{\ell} = {\varrho}(\bfW_{\ell},\bfC_{\ell},\bfP_{\ell})$ be the corresponding objective function value, then we have:
	
	(i) $\{{\varrho}_{\ell}\}$ is monotonically decreasing; 
	
	(ii) Sequences $\{\bfW_{\ell},\bfC_{\ell},\bfP_{\ell}\}$ and $\{{\varrho}_{\ell}\}$ converge.
\end{theorem}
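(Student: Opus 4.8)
The plan is to exploit the biconvex structure of Problem (\ref{op:final-N}), which the paper has already established: the objective $\varrho(\bfW, \bfC, \bfP)$ is convex in $\bfC$ for fixed $(\bfW, \bfP)$ and convex in $(\bfW, \bfP)$ for fixed $\bfC$. Algorithm \ref{alg:pgl} is precisely the alternate convex search (ACS) applied to this partition of variables into the two blocks $\bfC$ and $(\bfW, \bfP)$. My overall strategy for part (i) is to show that each of the two update steps within a single iteration can only decrease (or leave unchanged) the objective value, so that chaining them gives $\varrho_{\ell+1} \le \varrho_{\ell}$ for every $\ell$.

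First I would analyze the step that fixes $\bfC$ and solves for $(\bfW, \bfP)$. Since $\bfP$ and $\bfW$ decouple given $\bfC$, and each is solved to global optimality over its convex feasible region --- $\bfP$ via the closed-form expression (\ref{eq:final-N-P}) (which is the exact minimizer by Proposition \ref{prop:reformulation} and its KKT derivation) and $\bfW$ via the exact solution of the linear program (\ref{eq:final-N-W}) or the Kruskal-optimal solution of (\ref{eq:final-N-W-tree}) --- the block update returns a global minimizer of $\varrho(\bfW, \bfC_{\ell}, \bfP)$ over the admissible $(\bfW, \bfP)$. Consequently the value cannot increase. Next I would analyze the step that fixes $(\bfW, \bfP)$ and solves for $\bfC$: Problem (\ref{op:QP-C}) is an unconstrained convex quadratic with the invertible Hessian $2\bfL + \gamma \bm{\Lambda}$ (invertibility guaranteed because $\bfL \succeq 0$ and $\bm{\Lambda} \succ 0$), so the closed-form update attains the exact global minimum over $\bfC$, again not increasing the value. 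Composing the two steps yields monotonic non-increase, establishing (i).

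For part (ii), I would first observe that $\varrho$ is bounded below: the reversed-graph-embedding term and the quantization term are sums of nonnegative quantities, and the entropy term $\sigma \sum_{i,j} p_{i,j} \log p_{i,j}$ is bounded below on the probability simplex (it is minimized when each row is a point mass, giving value $0$, and in any case lies in $[-\sigma N \log N, 0]$). A monotonically decreasing sequence that is bounded below converges, which gives convergence of $\{\varrho_{\ell}\}$. To conclude convergence of the iterates $\{\bfW_{\ell}, \bfC_{\ell}, \bfP_{\ell}\}$, I would invoke the standard ACS convergence theory of Gorski et al.\ \cite{Gorski2007}: because each block subproblem has a \emph{unique} solution (the $\bfP$- and $\bfC$-updates are given by explicit formulas, and uniqueness of the $\bfW$-step can be argued or assumed generically), the sequence of iterates accumulates at stationary/partial-optimal points and, together with the boundedness of the feasible sets for $\bfW \in \W$ and $\bfP \in \P_r$ plus the coercivity controlling $\bfC$, the sequence converges.

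The main obstacle I anticipate is part (ii): monotone decrease of the objective values does \emph{not} by itself force the iterates themselves to converge, only that they have convergent subsequences if they lie in a compact set. The subtle points are (a) confirming that the $\bfC$-iterates remain bounded --- which should follow since the $\gamma$-weighted quantization term grows without bound as $\|\bfC\| \to \infty$ while $\bfX\bfP$ stays bounded, giving coercivity --- and (b) ruling out oscillation between distinct limit points with equal objective value, for which I would rely on the uniqueness of each block minimizer together with the continuity of the block-update maps, invoking the ACS theorem of \cite{Gorski2007} rather than reproving it. The remaining steps are routine once monotonicity, boundedness below, and uniqueness of block solutions are in place.
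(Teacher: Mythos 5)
Your treatment of part (i) and of the convergence of the objective values is essentially identical to the paper's proof: the same chain of inequalities $\varrho(\bfW_{\ell},\bfC_{\ell},\bfP_{\ell}) \geq \varrho(\bfW_{\ell},\bfC_{\ell},\bfP_{\ell+1}) \geq \varrho(\bfW_{\ell+1},\bfC_{\ell},\bfP_{\ell+1}) \geq \varrho(\bfW_{\ell+1},\bfC_{\ell+1},\bfP_{\ell+1})$ obtained from exact minimization of each convex block, the same entropy-based lower bound (the paper uses $\varrho^* \geq -\gamma\sigma N \log N$; you dropped the factor $\gamma$, a harmless slip), and the Monotone Convergence Theorem. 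Where you genuinely diverge is part (ii). The paper disposes of iterate convergence in one line: ``due to the compactness of feasible sets $\bfW$ and $\bfP$, the sequence $\{\bfW_{\ell},\bfP_{\ell}\}$ converges,'' after which $\bfC_{\ell}$ converges by continuity of the closed-form update $\bfC = \bfX\bfP(2\gamma^{-1}\bfL + \bm{\Lambda})^{-1}$. As you correctly observe, compactness yields only convergent \emph{subsequences}, so the paper's own argument for (ii) has a real gap, and your appeal to the alternate convex search theory of \cite{Gorski2007} --- uniqueness of each block minimizer plus continuity of the update maps to rule out oscillation among limit points of equal objective value --- is precisely the standard repair; your version is the more defensible one. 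Two caveats on your side: first, uniqueness of the $\bfW$-update is not automatic, since the linear program (\ref{eq:final-N-W}) and the spanning-tree problem (\ref{eq:final-N-W-tree}) can have multiple optima (e.g., tied edge costs in Kruskal's algorithm), so the ``generic'' uniqueness you flag is a substantive hypothesis rather than a formality; second, in the $\ell_1$-graph case the feasible set $\{\bfW \geq 0,\ \bfW = \bfW^T,\ diag(\bfW)=0\}$ is actually unbounded, so the compactness you (and the paper) invoke for $\bfW$ requires an additional argument, e.g., boundedness of the sublevel sets of the objective. Since the paper silently shares both weaknesses, your proposal matches its proof on (i) and improves on its reasoning for (ii).
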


\begin{proof}
Let $\{\bfW_{\ell},\bfC_{\ell},\bfP_{\ell}\}$ be a solution obtained in the $\ell$th iteration. By Algorithm 1, at the $(\ell+1)$th iteration, since each subproblem is solved exactly, we have
	\begin{align*}
	{\varrho}(\bfW_{\ell},\bfC_{\ell},\bfP_{\ell}) &\geq {\varrho}(\bfW_{\ell},\bfC_{\ell},\bfP_{\ell+1}) \geq {\varrho}(\bfW_{\ell+1},\bfC_{\ell},\bfP_{\ell+1}) \\
	&\geq {\varrho}(\bfW_{\ell+1},\bfC_{\ell+1},\bfP_{\ell+1}).
	\end{align*}
%
So sequence $\{{\varrho}_{\ell}\}$ is monotonically decreasing. Furthermore, 
function ${\varrho}(\bfW,\bfC,\bfP)$ is lower-bounded, and then by Monotone Convergence Theorem, there exists ${\varrho}^*\geq -\gamma \sigma N \log N$, such that $\{{\varrho}_{\ell}\}$ converges to ${\varrho}^*$.
%

Next, we prove that the sequence $\{\bfW_{\ell},\bfC_{\ell},\bfP_{\ell}\}$  generated by Algorithm 1 also converges. 
Due to the compactness of feasible sets $\bfW$ and $\bfP$,
we have the sequence $\{ \bfW_{\ell},\bfP_{\ell}\}$ converges to $\{\bfW^*,\bfP^*\}$ as $\ell \rightarrow \infty$.
%
Since $\bfC_{\ell} = \bfX \bfP (2 \gamma^{-1} \bfL + \Lambda )^{-1}$, $\{\bfZ_{\ell} \}$ converges to $\bfC^* = \bfX \bfP^* (2 \gamma^{-1} \bfL^* + \Lambda^*)^{-1}$, where $\bfL^* = diag(\bfW^* \bfone) - \bfW^*$ and $\Lambda^*= diag( \bfone^T \bfP^*)$.
\end{proof}

According to Theorem \ref{th:conv}, we define the stopping criterion of Algorithm \ref{alg:pgl} in terms of the relative increase in the function value compared to the last iteration, and fix it as $10^{-5}$ in all experiments. The empirical convergence results are shown in Section \ref{sec:exconvergence}.

The time complexity of Algorithm \ref{alg:pgl} for learning a tree structure is determined by three individual parts. The first part is the complexity of running  Kruskal's algorithm to construct a minimum spanning tree. It requires ${\mathcal O}(N^2 D)$ for computing a fully connected graph and $ {\mathcal O}(N^2 \log N) $ for finding a spanning tree. The second part is dominated by computing the soft assignments of samples, which has a complexity of ${\mathcal O}(N^2 D)$. The third part is dominated by the inverse of a matrix  of size $N \times N$ that takes ${\mathcal O}(N^3)$ operations and matrix multiplication that takes $\mathcal{O}(DN^2)$ operations. Therefore, the total complexity for each iteration is ${\mathcal O}(N^3 + DN^2)$. For learning an undirected weighted $\ell_1$ graph by solving Problem (\ref{eq:final-N-W}), the only difference is the complexity of linear programming, which can be solved efficiently by Mosek \cite{andersen2000mosek}. In Section \ref{sec:large}, we will extend the proposed algorithm for dealing with large-scale data.

\section{Regularized Principal Graph Learning on  Large-Scale Data} \label{sec:large}

In order to reduce the high computational complexity of Algorithm \ref{alg:pgl} for large-scale data, we propose to incorporate two strategies into the proposed model for fast learning by using landmarks and side information.

\subsection{Graph Learning with Side Information}

Instance-level constraints are useful to express a priori knowledge about which instances should or should not be grouped together, which have been successfully applied to many learning tasks, such as clustering \cite{Wagstaff2001}, metric learning \cite{xing2002distance}, and kernel learning \cite{zhuang2011family}. In the area of clustering, the most prevalent
form of advice are conjunctions of pairwise instance level-constraints of the
form must-link (ML) and cannot-link (CL) which state that pairs of instances
should be in the same or different clusters respectively \cite{Wagstaff2001}. Given a set of points to cluster and a set of constraints, the aim of clustering with constraints
is to use the constraints to improve the clustering results.

We consider certain structure-specific side information for guidance of learning the similarity matrix $\bfW$ and computational reduction instead of optimizing the full matrix. For this purpose, we take into account the CL constraints. Let $\N_i$ be a set of data points which might be linked to data point $\bfx_i$. On the contrary, data points that are not in $\N_i$ will belong to CL set. By incorporating these CL side information into the proposed framework, we can come out the following optimization problem for learning $\ell_1$ graph representation given by

\begin{small}\vspace{-0.1in}
	\begin{align}
	\min_{\bfW}&~ \sum_{i=1}^N \sum_{j \in \N_i} w_{i,j} \phi_{i,j} + \lambda \sum_{i=1}^N \Big|\Big| \bfx_i - \sum_{j \in \N_i} w_{j,i} \bfx_j \Big|\Big|_1 \\
	\textrm{s.t.} &~ w_{i,j} \geq 0, w_{i,j} = w_{j,i}, \forall i, j \nonumber \\
	&~ w_{i,j}=0, \forall i, j \not \in \N_i \nonumber.
	\end{align}
\end{small}\noindent
which can be equivalently transformed into linear programming optimization problem
and can be solved accurately for large-scale data by using the off-the-shelf toolbox.

\subsection{Landmark-based Regularized Principal Graph} \label{sec:landmark}

In order to handle large-scale data, we extend the proposed regularized principal graph learning framework by using a landmark-based technique.
Landmarks have been widely used in clustering methods to deal with large-scale data \cite{Fowikes2004, Chen2011}. Random landmark selection method is widely used for spectral clustering \cite{Fowikes2004, Chen2011}. 
It is well known that selecting landmarks can further improve clustering performance with a certain adaptive strategy, such as $k$-means \cite{Yan2009}, a variety of fixed and adaptive sampling schemes, and a family of ensemble-based sampling algorithms \cite{Kumar2012}. 

Our regularized principal graph learning framework can inherently take the landmark data points into account through the data grouping. Taking the weighted undirected $\ell_1$ graph based framework as an example, 
the optimization problem by considering landmarks is given by 

\begin{small} \vspace{-0.1in}
	\begin{align}
	\min_{\mathbb{c}, \bfW, \bfP}&~ \sum_{k=1}^K \sum_{k'=1}^K w_{i,j} ||\bfc_k - \bfc_{k'}||_2^2 + \lambda \sum_{k=1}^K \Big|\Big| \bfz_k - \sum_{k' \not= K} w_{k',k} \bfz_{k'} \Big|\Big|_1 \nonumber\\
	+&~\gamma \sum_{i=1}^N \sum_{k=1}^K p_{i,k} \Big [ || \bfx_i - \bfc_k ||_2^2 + \sigma \log p_{i,k} \Big] \label{op:final-K}\\
	\textrm{s.t.} &~ \bfW \geq 0, \bfW = \bfW^T, diag(\bfW)=0 \nonumber \\
	&~ \sum_{k=1}^K p_{i,k} = 1, p_{i,k} \geq 0, \forall i=1,\ldots,N, k =1,\ldots,K. \nonumber
	\end{align}
\end{small}\noindent
where $\mathbb{z}=\{ \bfz_1,\ldots, \bfz_K \}$ is a set of landmarks of size $K$.
Moreover, our proposed methods can automatically adjust centroids of landmarks during the optimization procedure. Due to the non-convexity of above objective function, we have to properly initialize these landmarks. 
As shown in the literature \cite{Yan2009, Chen2011}, $k$-means can generally obtain better clustering performance for spectral clustering methods. In this paper, we follow this idea and use centroids obtained from $k$-means to initialize $\bfZ$. The pseudo-code is shown in Algorithm \ref{alg:pgl-large}. By following the same analysis procedure as Section \ref{sec:convergence}, the computational complexity of Algorithm \ref{alg:pgl-large} is  $\mathcal{O}(K^3 + D K N + D K^2 )$ which is significantly smaller than ${\mathcal O}(N^3 + DN^2)$ of Algorithm \ref{alg:pgl} if $K \ll N$. Hence, Algorithm \ref{alg:pgl-large} is practical for large-scale data with a large $N$.

\begin{algorithm}[!t]
	\caption{Large-Scale Principal Graph Learning}
	\label{alg:pgl-large}
	\begin{small}
		\begin{algorithmic}[1]
			\STATE \textbf{Input:} Data $\bfX \in \mathbb{R}^{D \times N}$, $\lambda$, $\sigma$, $\gamma$ and $K \ll N$
			\STATE Obtain $\bfZ \in \mathbb{R}^{D \times K}$ by calling the $K$-means method
			\STATE Initialize $\bfC = \bfZ$ 
			\REPEAT 
			\STATE  $\phi_{k,k'} = ||\bfc_k - \bfc_{k'}||_2^2, \forall k, k'$
			\STATE $p_{i,k} = \frac{\exp \left( - {||\bfx_i - \bfc_k||^2}/{\sigma} \right)  }{ \sum_{k=1}^K \exp \left( - {||\bfx_i - \bfc_k||^2}/{ \sigma}  \right)  }, \forall i, k$
			\STATE Solve the following linear programming problem: \\
			- (\ref{eq:final-N-W}) for a weighted undirected $\ell_1$ graph \\
			- (\ref{eq:final-N-W-tree}) for a spanning tree
			\STATE $\bfC = \bfX \bfP ( 2 \gamma^{-1} \bfL + \bm{\Lambda} )^{-1}$
			\UNTIL{Convergence}
		\end{algorithmic}
	\end{small}\noindent
\end{algorithm}

\section{Experiments} \label{sec:experiments}
In this section, we conduct extensive experiments to evaluate the proposed models for learning either spanning trees or weighted undirected $\ell_1$ graphs on various synthetic datasets and six real world applications.

\subsection{Convergence and Sensitivity Analysis} \label{sec:exconvergence}

We perform a convergence analysis of Algorithm \ref{alg:pgl} using a synthetic tree dataset. Fig. \ref{fig:covergence} shows the empirical convergence results obtained by learning two different graph structures as well as their intermediate results. 
The top panel of Fig. \ref{fig:covergence} shows that the empirical convergence results by illustrating the relative difference of the objective function value in terms of the number of iterations in Algorithm \ref{alg:pgl}. 
We observe that the proposed algorithm converges in less than $20$ iterations. This is consistent with the result of the theoretical analysis in Section \ref{sec:convergence}. From the bottom panel of Fig. \ref{fig:covergence}, we can see that when Algorithm \ref{alg:pgl} continues with more iterations, the tree structure becomes smoother. This empirically verifies the intuition of the reverse graph embedding.

We then perform a parameter sensitivity analysis by using the distorted S-shape dataset to demonstrate how the algorithm behaves with respect to parameters $\sigma$, $\gamma$ and $\lambda$. As $\lambda$ only appears in Algorithm \ref{alg:pgl} for learning an $\ell_1$ graph, we investigate the sensitivity of parameters in terms of learning an $\ell_1$ graph. Results obtained from learning an $\ell_1$ graph can be similarly applied to Algorithm \ref{alg:pgl} for learning a spanning tree. Fig. \ref{fig:sensitivity} shows the $\ell_1$ graphs constructed by varying one parameter and fixing the others. We have the following observations. First, according to Fig. \ref{fig:sensitivity} (a), it is clear that the smaller $\gamma$ is, the shorter the length of the S-shape curve is. In contrast, the larger $\gamma$ is, the more faithful the curve passes through the middle of the data. Therefore, $\gamma$ is an important parameter that controls the trade-off between the curve fitting error and the length of a principal graph. Second, as shown in Fig. \ref{fig:sensitivity}(b), the graph structure becomes smoother when increasing $\sigma$. This is also stated in Propositions \ref{prop:mean-shift} and \ref{prop:reformulation} that explore the relationships between the proposed formulation and the mean-shift clustering. In other words, a large $\sigma$ encourages more data points to merge together. The data movement represented by variable matrix $\bfC$ in Algorithm \ref{alg:pgl} is also restricted on a graph and the length of the graph should be minimized so that data points in $\bfC$ will be smoother so as to reach the goal. The choice of bandwidth parameter in Algorithm \ref{alg:pgl} is then similar to that in the mean-shift clustering. As suggested in \cite{carreira2015review}, it is best to explore a range of bandwidths instead of estimating one from data by minimizing a loss function or more heuristic rules since clustering is by nature exploratory, so is the learning structure in this paper. Third, the larger $\lambda$ is, more edges the learned graph contains as shown in Figs. \ref{fig:sensitivity}(c)-(f). This is the reason why the noise term becomes large and less data points are required to represent the current data point if $\lambda$ is a large value.

\begin{figure}
\centering
\includegraphics{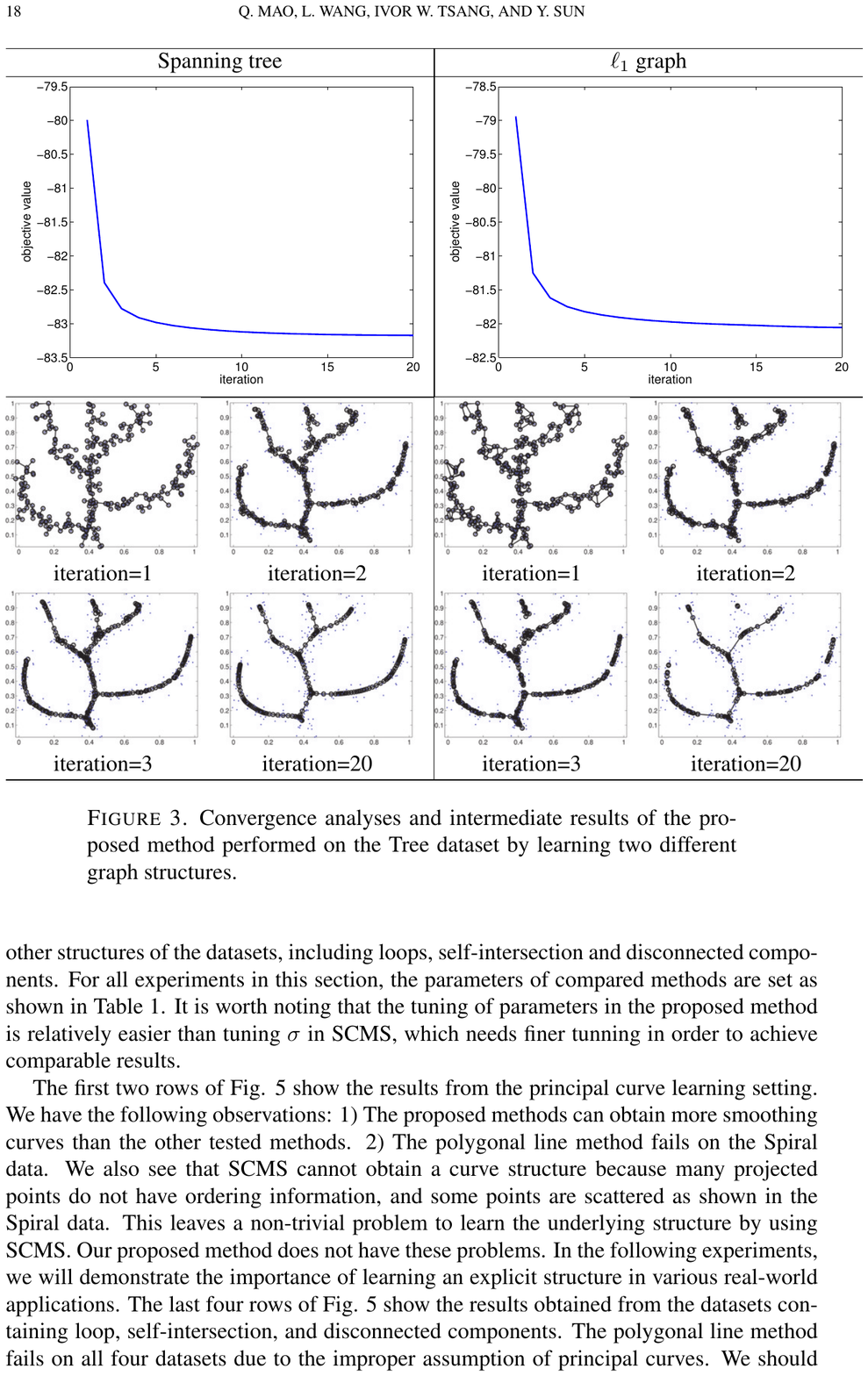}
\caption{Convergence analyses and intermediate results of the proposed method performed on the Tree dataset by learning two different graph structures.} \label{fig:covergence}
\end{figure}

%
%

Due to the exploratory nature of learning a graph structure from data and the lack of evaluation metrics for graph structure learning in an unsupervised setting, the automation of setting parameters by using the provided data only is generally difficult. However, the aforementioned parameter sensitivity analysis suggests a pragmatic way to tune parameters for learning an expected graph structure. The recommended procedure is given as follows: first, setting $\sigma$ to be a small value, and then tuning $\gamma$ from large value to small so as to fit the data properly, finally increasing $\sigma$ to obtain a reasonable structure. In order to learn an $\ell_1$ graph from data, the initial graph structure should be constructed by solving Problem (\ref{eq:final-N-W}) using a proper $\lambda$ such that the weight matrix $\bfW$ captures the key structure of the input data. And then, the above recommended procedure for tuning $\sigma$ and $\gamma$ is applied. For tuning $\sigma$ and $\gamma$ automatically from data, we have studied in our previous work \cite{Mao2015} by using leave-one-out-maximum likelihood criterion described in \cite{Ozertem2011} and gap statistics \cite{Witten2010} for $\sigma$ and $\gamma$ respectively. As discussed in \cite{carreira2015review}, it is natural to tune parameter $\sigma$ in a different range.
In the following experiments, we will take the above tuning strategy for setting all parameters in a large range, and all parameters used will be reported for the reproducibility of the experiments.


\begin{figure}
\centering
\includegraphics{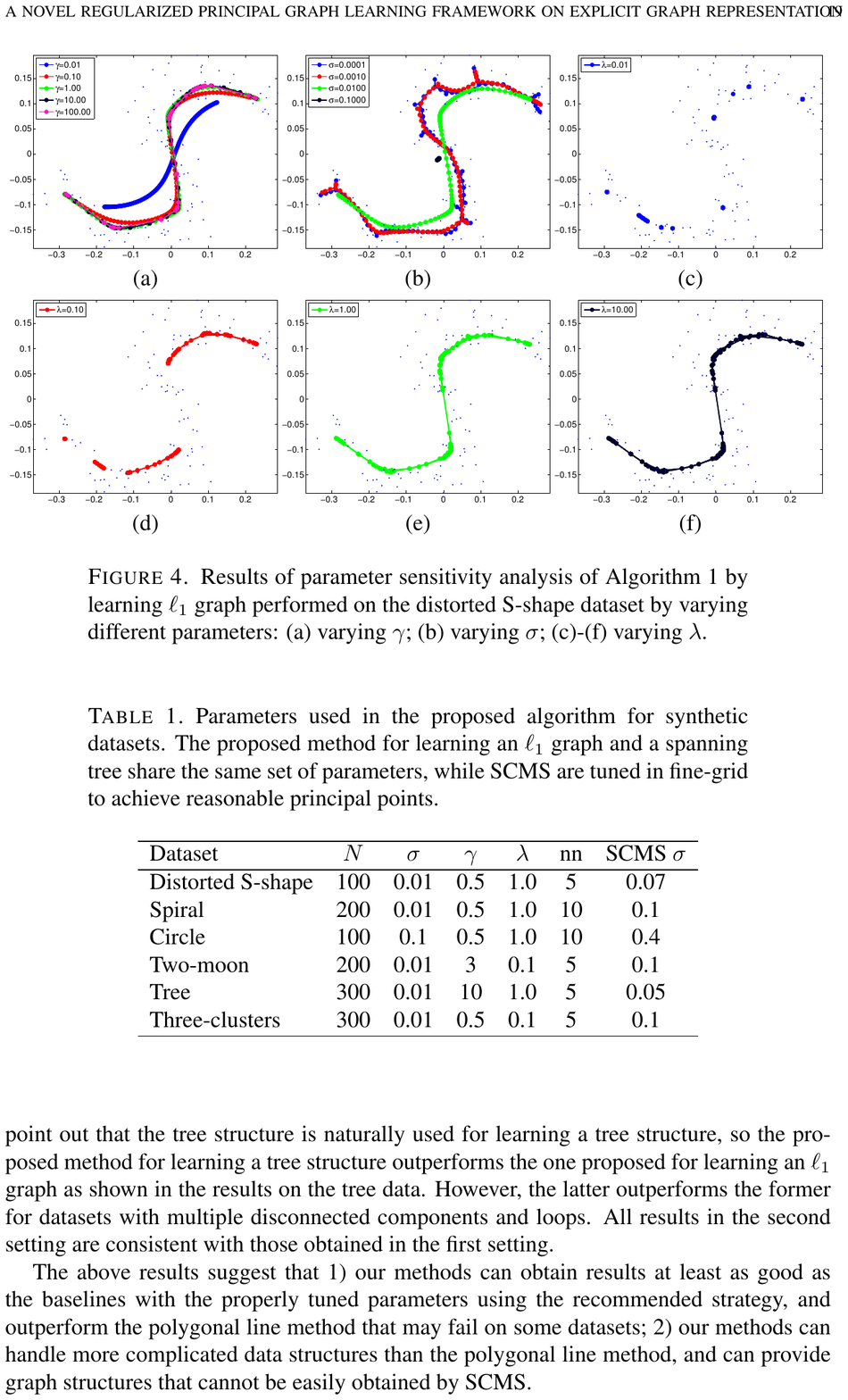}
\caption{Results of parameter sensitivity analysis of Algorithm \ref{alg:pgl} by learning $\ell_1$ graph performed on the distorted S-shape dataset by varying different parameters: (a) varying $\gamma$; (b) varying $\sigma$; (c)-(f) varying $\lambda$. } \label{fig:sensitivity}
\end{figure}


\subsection{Synthetic Data}

\begin{figure}
\centering
\includegraphics{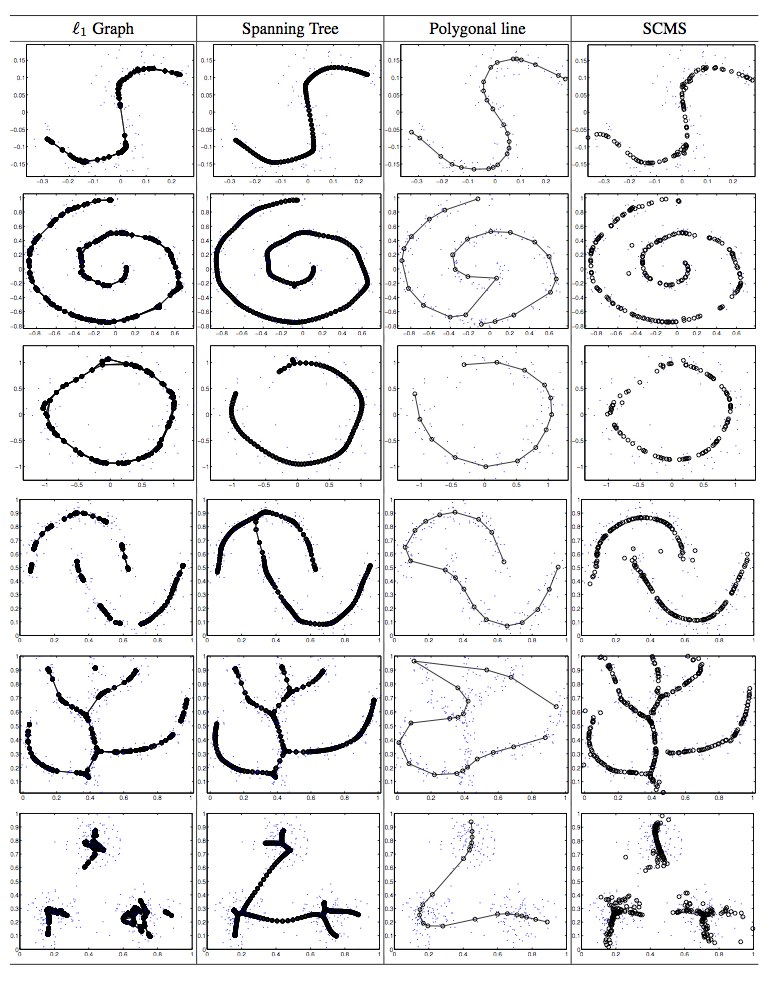}
	\caption{Results of four principal curve methods performed on six synthetic datasets containing various situations including curves, loops, self-intersections, and multiple disconnected components. The first and second columns show the results of our proposed methods for learning either an $\ell_1$ graph or a spanning tree. 	
		The third and forth columns report the results generated by the polygonal line method and SCMS, respectively.} \label{fig:curves}
\end{figure}

We evaluate the performance of Algorithm \ref{alg:pgl} for learning either a spanning tree or an $\ell_1$ graph by comparing with the polygonal line method \cite{Kegl2000} and SCMS \cite{Ozertem2011} on six synthetic datasets. Among them, the first two datasets are also used in \cite{Kegl2000, Ozertem2011}. In the case of learning an $\ell_1$ graph, we incorporate neighbors of each data point as the side information, i.e., data points not in the neighborhood of a data point are considered as a set of cannot-links for the data point. In this paper, we take $nn$-nearest neighbor as a showcase for side-information.  The experiments are conducted in two settings. The first setting is to evaluate the four methods for learning curves, while the second setting is to investigate other  structures of the datasets, including loops, self-intersection and disconnected components. For all experiments in this section, the parameters of compared methods are set as shown in Table \ref{tab:toy-params}. It is worth noting that the tuning of parameters in the proposed method is relatively easier than tuning $\sigma$ in SCMS, which needs finer tunning in order to achieve comparable results.

\begin{table}
	\caption{Parameters used in the proposed algorithm for synthetic datasets. The proposed method for learning an $\ell_1$ graph and a spanning tree share the same set of parameters, while SCMS are tuned in fine-grid to achieve reasonable principal points. } \label{tab:toy-params}
	\centering
	\begin{tabular}{lcccccc}
		\hline
		Dataset	 & $N$ & $\sigma$ & $\gamma$ & $\lambda$ & nn & SCMS $\sigma$ \\
		\hline
		Distorted S-shape & 100 & 0.01 & 0.5 & 1.0 & 5 &0.07\\
		Spiral & 200 & 0.01 & 0.5 &1.0 & 10 &0.1\\
		Circle & 100  & 0.1 & 0.5  &1.0 & 10 & 0.4\\	 
		Two-moon &200& 0.01 & 3 & 0.1 & 5 &0.1\\
		Tree &300& 0.01 & 10 & 1.0 & 5 & 0.05\\
		Three-clusters &300& 0.01 & 0.5 & 0.1 & 5 & 0.1\\
		\hline
	\end{tabular}
\end{table}

The first two rows of Fig. \ref{fig:curves} show the results from the principal curve learning setting. We have the following observations: 1) The proposed methods can obtain more smoothing curves than the other tested methods. 2) The polygonal line method fails on the Spiral data. We also see that SCMS cannot obtain a curve structure because many projected points do not have ordering information, and some points are scattered as shown in the Spiral data. This leaves a non-trivial problem to learn the underlying structure by using SCMS. Our proposed method does not have these problems. In the following experiments, we will demonstrate the importance of learning an explicit structure in various real-world applications.
The last four rows of Fig. \ref{fig:curves} show the results obtained from the datasets containing loop, self-intersection, and disconnected components. The polygonal line method fails on all four datasets due to the improper assumption of principal curves. We should point out that the tree structure is naturally used for learning a tree structure, so the proposed method for learning a tree structure outperforms the one proposed for learning an $\ell_1$ graph as shown in the results on the tree data. However, the latter outperforms the former for datasets with multiple disconnected components and loops.
All results in the second setting are consistent with those obtained in the first setting.

The above results suggest that 1) our methods can obtain results at least as good as the baselines with the properly tuned parameters using the recommended strategy, and outperform the polygonal line method that may fail on some datasets; 2) our methods can handle more complicated data structures than the polygonal line method, and can provide graph structures that cannot be easily obtained by SCMS.

\subsection{Rotation of Teapot Images}

A collection of $400$ teapot images from \cite{Weinberger2006} are used\footnote{http://www.cc.gatech.edu/$\sim$lsong/data/teapotdata.zip}.  These images were taken successively as a teapot was rotated $360^{\circ}$. Our goal is to construct a principal curve that organizes the 400 images.  Each image consists of $76 \times 101$ pixels and is represented as a vector. The data in each dimension is normalized to have zero mean and unit standard deviation. Similar to \cite{Song2007}, a kernel matrix ${\bf X}$ is generated where $X(i,j) =  \exp(- ||\bfx_i - \bfx_j||^2 /D)$.

We run our proposed method using the kernel matrix as the input. We set $\sigma=0.5, \gamma=100, \lambda=1$ and $nn=10$. We first perform PCA on the kernel matrix and project it to a $36$-dimensional space that keeps $95\%$ of total energy. The experimental results of the proposed method for learning a spanning tree and an $\ell_1$ graph are shown in Fig. \ref{fig:teapot}. The principal curves (Fig. \ref{fig:teapot}(a) and \ref{fig:teapot}(c)) are shown in terms of the first $3$ columns of the learned projection matrix $\bfW$ as the coordinates where each dot $\bfy_i$ represents the $i$th image and the sampled images at intervals of $30$ are plotted for the purpose of visualization. Fig. \ref{fig:teapot}(b) and \ref{fig:teapot}(d) show the linear chain dependency among the teapot images following the consecutive rotation process. We can see that the curve generated by our method is in agreement with the rotating process of the $400$ consecutive teapot images. For this data, $\ell_1$ graph is more reasonable than spanning tree since the underlying manifold forms a loop. This is clearly demonstrated in Fig. \ref{fig:teapot}.


\begin{figure}
	\centering

\includegraphics{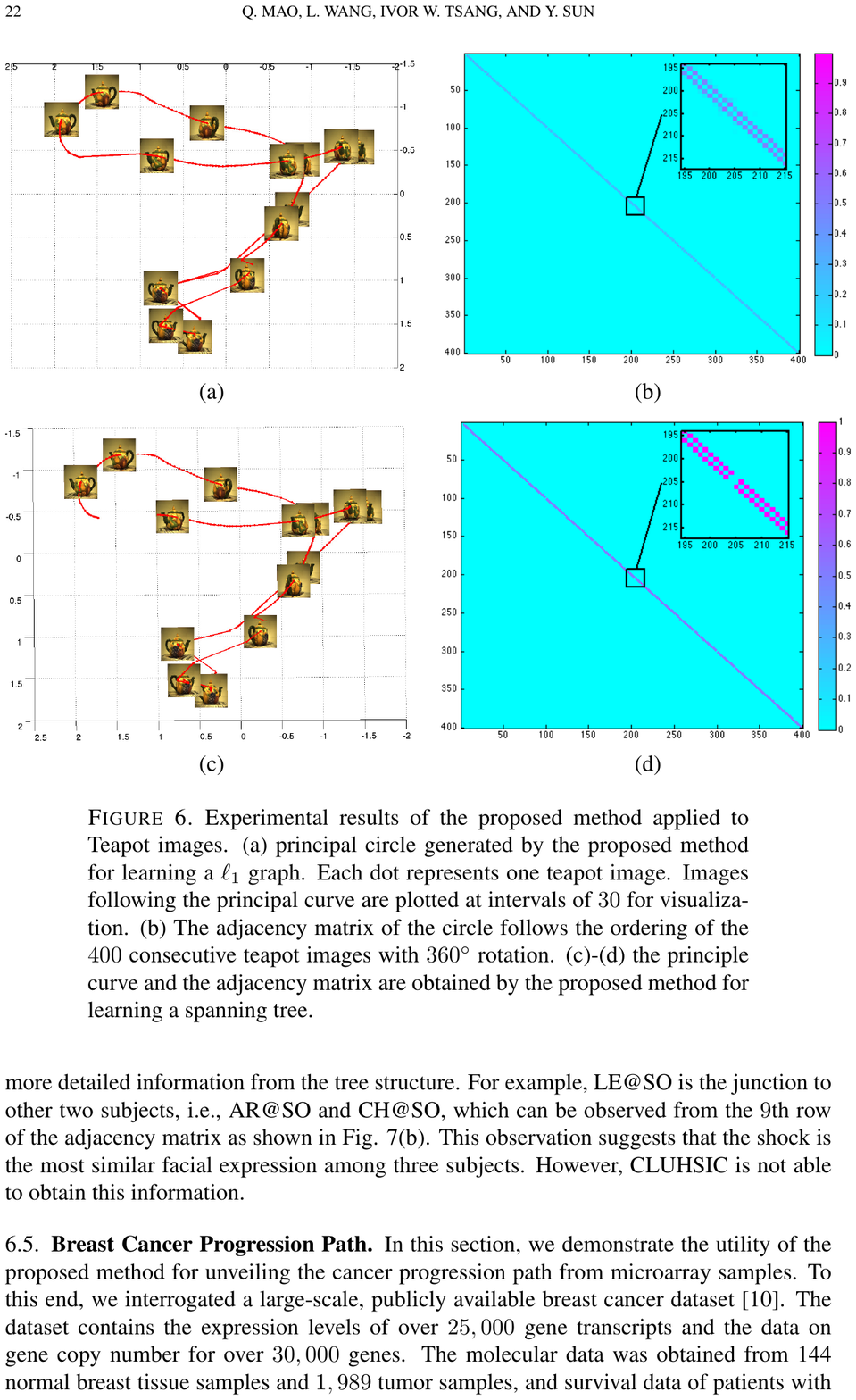} 

	\caption{Experimental results of the proposed method applied to Teapot images. (a) principal circle generated by the proposed method for learning a $\ell_1$ graph. Each dot represents one teapot image. Images following the principal curve are plotted at intervals of $30$ for visualization. (b) The adjacency matrix of the circle follows the ordering of the $400$ consecutive teapot images with $360^\circ$ rotation. (c)-(d) the principle curve and the adjacency matrix are obtained by the proposed method for learning a spanning tree. } \label{fig:teapot}
\end{figure}

A similar result is also recovered by CLUHSIC, which assumes that the label kernel matrix is a ring structure \cite{Song2007}. However, there are two main differences. First, the principal curves generated by our methods are much smoother than that obtained by CLUHSIC (see Figure 4 in \cite{Song2007}). Second, our method learns the adjacency matrix from the given dataset, but CLUHSIC requires a label matrix as {\em a priori}. A two-dimensional representation of the same set of teapot images is given in \cite{Weinberger2006},  where Maximum Variance Unfolding (MVU) is used that arranges the images in a circle (see Figure 3 in \cite{Weinberger2006}). We attempted to run MVU by keeping $95\%$ energy, i.e., $d=36$. However, storage allocation fails due to the large memory requirement of solving a semi-definite programming problem in MVU. Hence, MVU cannot be applied to learn a relatively large intrinsic dimensionality. However, our method does not suffer from this issue.

\subsection{Hierarchical Tree of Facial Images}

Facial expression data\footnote{http://www.cc.gatech.edu/$\sim$lsong/data/facedata.zip} is used for hierarchical clustering, which takes into account both the identities of individuals and the emotion being expressed \cite{Song2007}. This data contains $185$ face images ($308 \times 217$ RGB pixels) with three types of facial expressions (NE: neutral, HA: happy, SO: shock) taken from three subjects (CH, AR, LE) in an alternating order, with around $20$ repetitions each. Eyes of these facial images have  been aligned, and the average pixel intensities have been adjusted. As with the teapot data, each image is represented as a vector, and is normalized in each dimension to have zero mean and unit standard deviation. 

A kernel matrix is used as the input to Algorithm \ref{alg:pgl} for learning a spanning tree in the setting of $\sigma=0.01 $ and $ \gamma=1$. The experimental results are shown in Fig. \ref{fig:face}. We can clearly see  that three subjects are connected through different branches of a tree.  If we take the black circle in Fig. \ref{fig:face}(a) as the root of a hierarchy, the tree forms a two-level hierarchical structure.  As shown in Fig. \ref{fig:face}(b), the three facial expressions from three subjects are also clearly separated. A similar two-level hierarchy is also recovered by CLUHSIC (Figure 3(b) in \cite{Song2007}). However,  the advantages of using the proposed method discussed above for the teapot images are also applied here.
In addition, we can observe more detailed information from the tree structure. For example, LE@SO is the junction to other two subjects, i.e., AR@SO and CH@SO, which can be observed from the $9$th row of the adjacency matrix as shown in  Fig. \ref{fig:face}(b). This observation suggests that the shock is the most similar facial expression among three subjects. However, CLUHSIC is not able to obtain this information. 


\begin{figure} 
	\centering
		\includegraphics{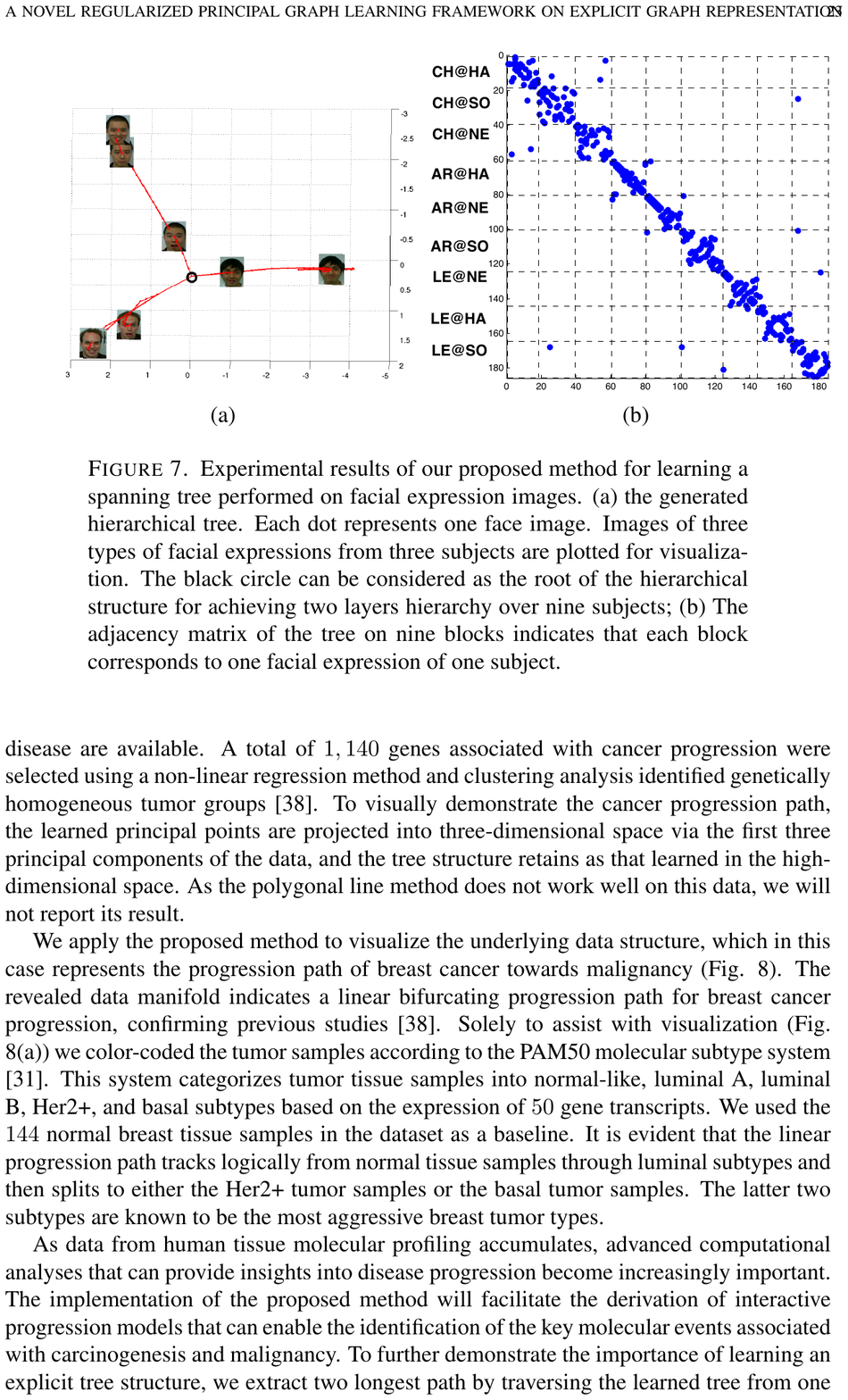}
	\caption{Experimental results of our proposed method for learning a spanning tree performed on facial expression images. (a) the generated hierarchical tree. Each dot represents one face image. Images of three types of facial expressions from three subjects are plotted for visualization. The black circle can be considered as the root of the hierarchical structure for achieving two layers hierarchy over nine subjects; (b)  The adjacency matrix of the tree on nine blocks indicates that each block corresponds to one facial expression of one subject. } \label{fig:face}
\end{figure}

\subsection{Breast Cancer Progression Path}


\begin{figure}
\centering
		\includegraphics{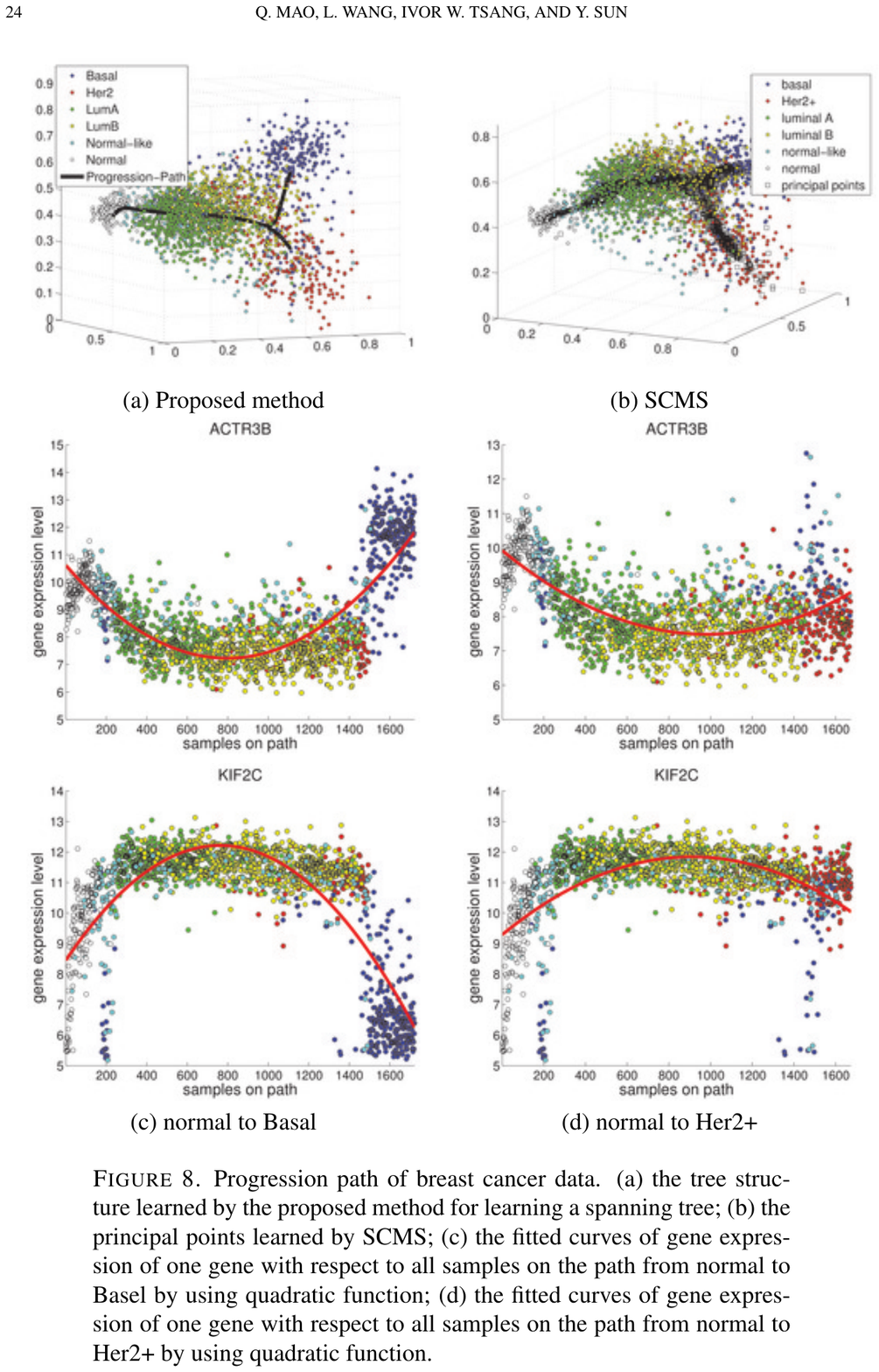}  
		\caption{Progression path of breast cancer data. (a) the tree structure learned by the proposed method for learning a spanning tree; (b) the principal points learned by SCMS; (c) the fitted curves of gene expression of one gene with respect to all samples on the path from normal to Basel by using quadratic function; (d) the fitted curves of gene expression of one gene with respect to all samples on the path from normal to Her2+ by using quadratic function. } \label{fig:cancer}
\end{figure}

In this section, we demonstrate the utility of the proposed method for unveiling the cancer progression path from microarray samples. To this end, we interrogated a large-scale, publicly available breast cancer dataset \cite{Shah2012}. The dataset contains the expression levels of over $25,000$ gene transcripts and the data on gene copy number for over $30,000$ genes. The molecular data was obtained from 144 normal breast tissue samples and $1,989$ tumor samples, and survival data of patients with disease are available.  A total of $1,140$ genes associated with cancer progression were selected using a non-linear regression method and clustering analysis identified genetically homogeneous tumor groups \cite{Sun2014}. To visually demonstrate the cancer progression path, the learned principal points are projected into three-dimensional space via the first three principal components of the data, and the tree structure retains as that learned in the high-dimensional space. As the polygonal line method does not work well on this data, we will not report its result.

We apply the proposed method to visualize the underlying data structure, which in this case represents the progression path of breast cancer towards malignancy (Fig. \ref{fig:cancer}). The revealed data manifold indicates a linear bifurcating progression path for breast cancer progression, confirming previous studies \cite{Sun2014}. Solely to assist with visualization (Fig. \ref{fig:cancer}(a)) we color-coded the tumor samples according to the PAM50 molecular subtype system \cite{Parker2009}.  This system categorizes tumor tissue samples into normal-like, luminal A, luminal B, Her2+, and basal subtypes based on the expression of $50$ gene transcripts. We used the $144$ normal breast tissue samples in the dataset as a baseline.  It is evident that the linear progression path tracks logically from normal tissue samples through luminal subtypes and then splits to either the Her2+ tumor samples or the basal tumor samples. The latter two subtypes are known to be the most aggressive breast tumor types. 

As data from human tissue molecular profiling accumulates, advanced computational analyses that can provide insights into disease progression become increasingly important. The implementation of the proposed method will facilitate the derivation of interactive progression models that can enable the identification of the key molecular events associated with carcinogenesis and malignancy. To further demonstrate the importance of learning an explicit tree structure, we extract two longest path by traversing the learned tree from one leaf node labeled as normal sample to the other leaf node labeled as either Basal or Her2+, respectively. The path from normal to Basal consists of $1722$ samples, while the path from normal to Her2+ consists of $1673$ samples. For $50$ important genes used in PAM50 subtypes, we plot the variation of gene expression in terms of the samples over a given path. Fig. \ref{fig:cancer}(c) and (d) show the results of selected genes (ACTR3B and KIF2C) on two different paths and demonstrate different trend of gene expression variation. These genes might be the causes for cancer progression evolved from luminal B to either Basal or Her2+. For example, if the expression level of gene ACTR3B increases suddenly, the patient might go to the cancer state of Basal.
However, the principal points returned by SCMS as shown in Fig. \ref{fig:cancer}(b) does not have a clear progression structure, so it cannot obtain the similar implications as the proposed method.

\subsection{Skeletons of Optical Characters}

\begin{figure*}[!htb]
\centering
		\includegraphics{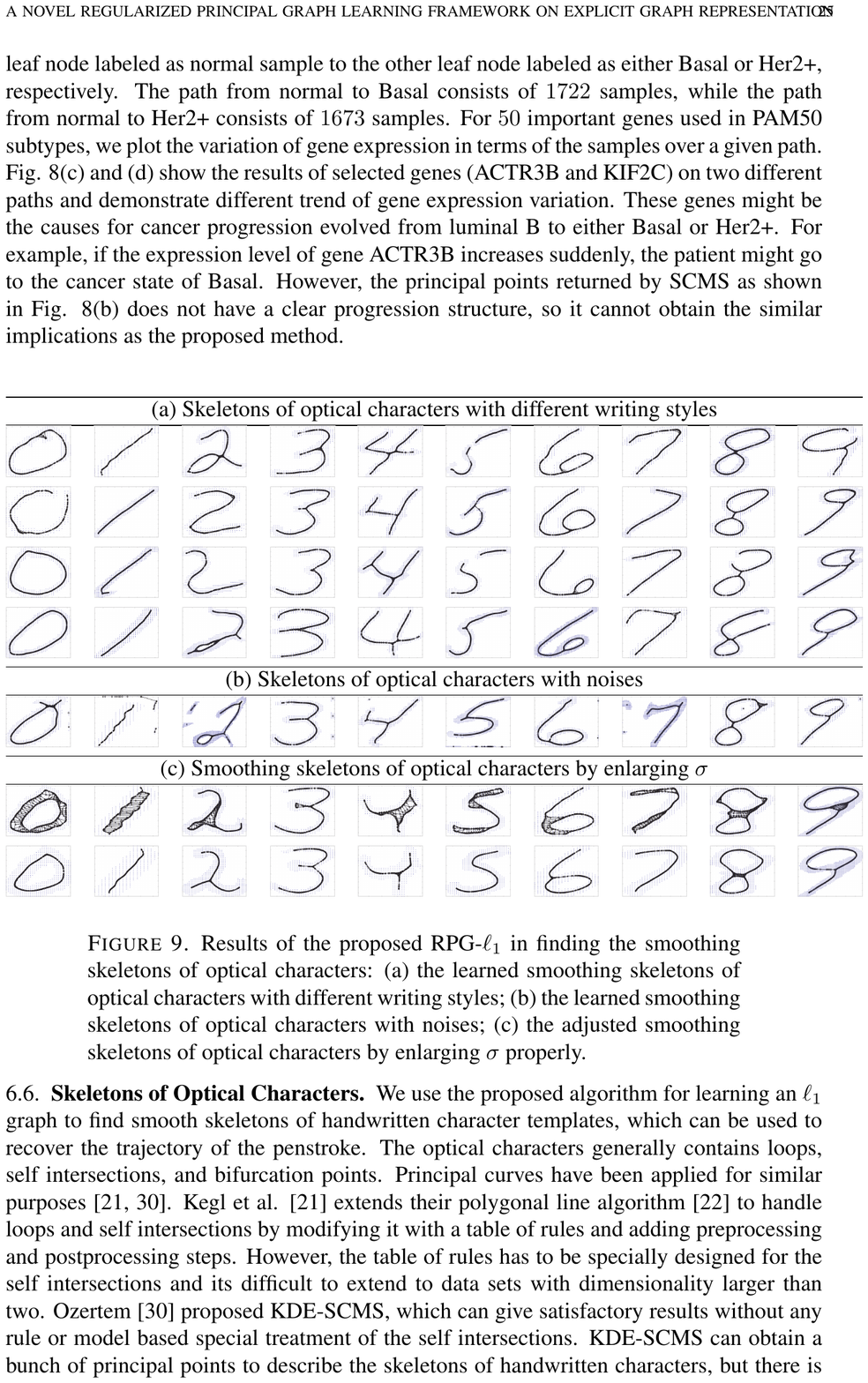} 
	\caption{Results of the proposed RPG-$\ell_1$ in finding the smoothing skeletons of optical characters: (a) the learned smoothing skeletons of optical characters with different writing styles; (b) the learned smoothing skeletons of optical characters with noises; (c) the adjusted smoothing skeletons of optical characters by enlarging $\sigma$ properly.} \label{fig:ocr} \vspace{-0.1in}
\end{figure*}


\begin{figure}
	\centering
		\includegraphics{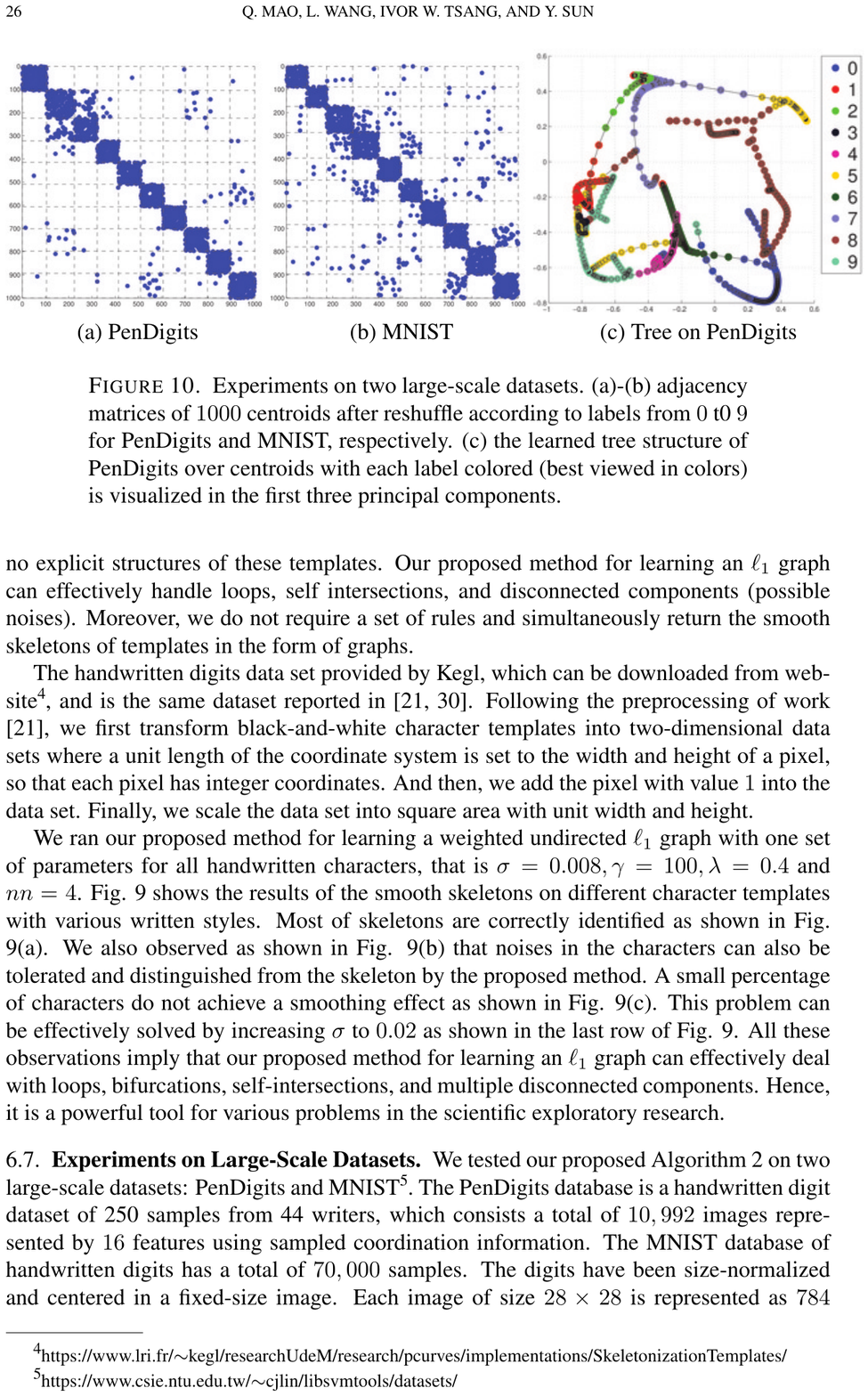} 
	\caption{Experiments on two large-scale datasets. (a)-(b) adjacency matrices of $1000$ centroids after reshuffle according to labels from $0$ t0 $9$ for PenDigits and MNIST, respectively. (c) the learned tree structure of PenDigits over centroids with each label colored (best viewed in colors) is visualized in the first three principal components.} \label{fig:large}
\end{figure}

We use the proposed algorithm for learning an $\ell_1$ graph to find smooth skeletons of handwritten character templates, which can be used to recover the trajectory of the penstroke. The optical characters generally contains loops, self intersections, and bifurcation points. Principal curves have been applied for similar purposes \cite{Kegl2002,Ozertem2011}.  Kegl et al. \cite{Kegl2002} extends their polygonal line algorithm \cite{Kegl2000} to handle loops and self intersections by modifying it with a table of rules and adding preprocessing and postprocessing steps. However, the table of rules has to be specially designed for the self intersections and its difficult to extend to data sets with dimensionality larger than two. Ozertem \cite{Ozertem2011} proposed KDE-SCMS, which can give satisfactory results without any rule or model based special treatment of the self intersections. KDE-SCMS can obtain a bunch of principal points to describe the skeletons of handwritten characters, but there is no explicit structures of these templates. Our proposed method for learning an $\ell_1$ graph can effectively handle loops, self intersections, and disconnected components (possible noises). Moreover, we do not require a set of rules and simultaneously return the smooth skeletons of templates in the form of graphs.

The handwritten digits data set provided by Kegl, which can be downloaded from website\footnote{https://www.lri.fr/$\sim$kegl/researchUdeM/research/pcurves/implementations/SkeletonizationTemplates/}, and is the same dataset reported in \cite{Kegl2002,Ozertem2011}. Following the preprocessing of work \cite{Kegl2002}, we first transform black-and-white character templates into two-dimensional data sets where a unit length of the coordinate system is set to the width and height of a pixel, so that each pixel has integer coordinates. And then, we add the pixel with value $1$ into the data set. Finally, we scale the data set into square area with unit width and height. 

We ran our proposed method for learning a weighted undirected $\ell_1$ graph with one set of parameters for all handwritten characters, that is $\sigma=0.008, \gamma=100, \lambda=0.4$ and $nn=4$.
Fig. \ref{fig:ocr} shows the results of the smooth skeletons on different character templates with various written styles. Most of skeletons are correctly identified as shown in Fig. \ref{fig:ocr}(a). We also observed as shown in Fig. \ref{fig:ocr}(b) that noises in the characters can also be tolerated and distinguished from the skeleton by the proposed method. A small percentage of characters do not achieve a smoothing effect as shown in Fig. \ref{fig:ocr}(c). This problem can be effectively solved by increasing $\sigma$ to $0.02$ as shown in the last row of Fig. \ref{fig:ocr}. All these observations imply that our proposed method for learning an $\ell_1$ graph can effectively deal with loops, bifurcations, self-intersections, and multiple disconnected components. Hence, it is a powerful tool for various problems in the scientific exploratory research.

\subsection{Experiments on Large-Scale Datasets}
We tested our proposed Algorithm \ref{alg:pgl-large} on two large-scale datasets: PenDigits and MNIST\footnote{https://www.csie.ntu.edu.tw/$\sim$cjlin/libsvmtools/datasets/}. 
The PenDigits database is a handwritten digit dataset of 250 samples from 44 writers, which consists a total of $10,992$ images represented by $16$ features using sampled coordination information. The MNIST database of handwritten digits has a total of $70,000$ samples. The digits have been size-normalized and centered in a fixed-size image. Each image of size $28 \times 28$ is represented as $784$ features. We scale each gray pixel to $[0,1]$ by dividing by $255$. Both handwritten digit datasets have labels from $0$ to $9$. In order to visualize the learned graph, we apply PCA to keep $95\%$ of total energy and obtain $D=9$ and $D=154$ for PenDigits and MNIST, respectively. And, we further rescale the projected low-dimensional sample by dividing by the maximum absolute value of each dimension independently. 

We test Algorithm \ref{alg:pgl-large} for learning a tree structure. The same set of parameters, i.e., $\sigma=0.1$ and $ \lambda=0.1$, is fixed for both datasets. Moreover, we apply K-means method to obtain $1,000$ centroids and use these centroids to initialize $\bfZ$ in Algorithm \ref{alg:pgl-large}. The label of each centroid is set by the label using majority voting method from labels of data points which are assigned to this cluster. Fig. \ref{fig:large} shows the results obtained by the proposed method on two datasets. The learned adjacency matrix $\bfW$ is reported in Fig. \ref{fig:large}(a) and \ref{fig:large}(b) for PenDigits and MNIST respectively, where data points are reshuffled according to their labels from $0$ to $9$. We implemented the method in MATLAB and the empirical CPU times spent on learning $\bfW$ are $53.51$ and $222.75$ seconds for PenDigits and MNIST, respectively. We have the following observations: First, the learned adjacency matrices demonstrate a good shape of diagonal matrix. In other words, data points with the same labels are sticked together, which is useful for clustering. In addition, we have explicit tree structure as shown in Fig. \ref{fig:large}(c) drawn in the first three principal components, which further demonstrate the relations among different labels. From Fig. \ref{fig:large}(c), it is clear to see that there is a path started from digits $0$, and passed through $6$, $5$, $9$, and finally to $4$. This is interesting since digits $0$, $6$, $5$, and $9$ are similar due to the bottom half of the images, while digits $9$ and $4$ are similar in terms of the top half of the images. That is, each path on the tree represents certain types of manifold over digits. Furthermore, our proposed Algorithm \ref{alg:pgl-large} can be used for exploratory analysis on a large amount of data points.

\section{Conclusion} \label{sec:conclusion}

In this paper, we proposed a simple principal graph learning framework, which can be used to obtain a set of principal points and a graph structure, simultaneously. The experimental results demonstrated the effectiveness of the proposed method on various real world datasets of different graph structures. Since our principal graph model are formulated for a general graph, the development of principal graph methods for other specific structure is also possible. As a future work, we will explore principal graph learning on other graphs such as $K$-nearest neighbor graphs and apply it to other real-world datasets.

\bibliographystyle{plain}
\bibliography{SGL}

\end{document}